\documentclass[a4paper,10pt]{article}

\usepackage{amsmath,amssymb,amsthm,enumerate}
\usepackage{graphicx}
\def\titre{Rate of Convergence and Error Bounds for LSTD($\lambda$)}
\newtheorem{thm}{Theorem}
\newtheorem{cor}{Corollary}
\newtheorem{lemma}{Lemma}

\newtheorem{remark}{Remark}

\newtheorem{assm}{Assumption}
\def\M{{\cal M}}
\def\X{{\cal X}}

\def\B{{\cal B}}
\def\S{{\cal S}}
\def\P{\mathbb{P}}
\def\E{\mathbb{E}}
\def\R{\mathbb{R}}

\usepackage[utf8]{inputenc}

\usepackage{fullpage}

\usepackage{natbib}

\usepackage[usenames]{color} 

\title{\titre}

\author{Manel Tagorti and Bruno Scherrer \\
INRIA Nancy Grand Est, Team MAIA\\
manel.tagorti@inria.fr, bruno.scherrer@inria.fr}

\begin{document}

\maketitle

\begin{abstract}%
We consider LSTD($\lambda$), the least-squares temporal-difference algorithm with eligibility traces algorithm proposed by Boyan (2002). It computes a linear approximation of the value function of a fixed policy in a large Markov Decision Process. Under a $\beta$-mixing assumption, we derive, for any value of $\lambda \in (0,1)$,  a high-probability estimate of the rate of convergence of this algorithm to its limit. We deduce a high-probability bound on the error of this algorithm, that extends (and slightly improves) that derived by Lazaric et al. (2010) in the specific case where $\lambda=0$. In particular, our analysis sheds some light on the choice of $\lambda$ with respect to the quality of the chosen linear space and the number of samples, that complies with simulations.

\end{abstract}

\section{Introduction}
In a large Markov Decision Process context, we consider LSTD($\lambda$), the least-squares
temporal-difference algorithm with eligibility traces
proposed by \citet{Boyan:2002}. It is a popular algorithm for estimating a
projection onto a linear space of the value function of a fixed
policy. Such a value estimation procedure can for instance be useful
in a policy iteration context to eventually estimate an approximately optimal controller \citep{ndp,reviewszepes}.

The asymptotic almost sure convergence of LSTD($\lambda$) was proved by
\citet{Nedic02leastsquares}.  Under a $\beta$-mixing assumption, and given a finite number of samples $n$, \citet{lazaric:2012}
derived a high-probability error bound with a $\tilde O(\frac{1}{\sqrt{n}})$ rate\footnote{Throughout the paper, we shall write $f(n) = \tilde O(g(n))$ as a  shorthand for $f(n) = O(g(n) \log^k g(n))$ for some $k \ge 0$.}
in the restricted situation where
$\lambda=0$. To our knowledge, however, similar finite-sample
error bounds are not known in the
literature for $\lambda>0$. The main goal of this paper is to fill this gap.  This is
all the more important that it is known that the parameter $\lambda$
allows to control the quality of the asymptotic solution of the value: by moving $\lambda$ from $0$ to $1$, one can continuously move from an oblique projection of the value \citep{scherrer:2010} to its orthogonal projection
and consequently improve the corresponding guarantee \citep{vanroy:1997} (restated in Theorem \ref{thm:errapp}, Section~\ref{section:main}).

The paper is organized as follows. Section~\ref{section:2} starts by describing the LSTD($\lambda$) algorithm and the necessary background. Section~\ref{section:main} then contains our main result (Theorem~\ref{thm:main}): for all $\lambda \in (0,1)$, we will show that LSTD($\lambda$) converges to its limit at a rate $\tilde O(\frac{1}{\sqrt{n}})$. We shall then deduce a global error (Corollary \ref{cor:globalerror}) that sheds some light on the role of the parameter $\lambda$ and discuss some of its interesting practical consequences. Section~\ref{section:proof} will go on by providing a detailed proof of our claims. Finally, Section \ref{section:4} concludes and describes potential future work.

\section{LSTD($\lambda$) and Related background}\label{section:2}

We consider a Markov chain $\M$ taking its values on a finite or countable state space\footnote{We restrict our focus to finite/countable mainly because it eases the presentation of our analysis. Though this requires some extra work, we believe the analysis we make here can be extended to more general state spaces.} $\X$, with transition kernel $P$. 
We assume $\M$ ergodic\footnote{In our countable state space situation, ergodicity holds if and only if the chain is aperiodic and irreducible, that is formally if and only if: $\forall (x,y)\in \X^2,~\exists n_0,~\forall n\geq n_0,~P^n(x,y)>0.$}; 
consequently, it  admits a unique stationary distribution $\mu$. 
For any $K\in \R$, we denote $\B(\X,K)$ the set of measurable functions defined on $\X$ and bounded by $K$. We consider a reward function $r \in \B(\X,R_{\text{max}})$ for some $R_{\text{max}} \in \R$, that provides the quality of being in some state.
The value function $v$ related to the Markov chain $\M$ is defined, for any state $i$,  as the average discounted sum of rewards along infinitely long trajectories starting from $i$:
\[\forall i\in \X,~ v(i)=\E\left[\sum^\infty_{j=0}\gamma^jr(X_j)\left|\vphantom{\sum^\infty_{j=0}\gamma^jr(X_j)}X_0=i\right.\right],\]
where $\gamma \in (0,1)$ is a discount factor.
It is well-known that the value function  $v$ is the unique fixed point of the linear Bellman operator~$T$:
\[\forall i\in \X,~ Tv(i)=r(i)+\gamma \E \left[v(X_1)|X_0=i \right].\]
It can easily be seen that $v \in \B(\X,V_{\text{max}})$ with $V_{\text{max}}=\frac{R_{\text{max}}}{1-\gamma}$.

When the size $|\X|$ of the state space is very large, one may consider approximating $v$ by using \emph{a linear architecture}.
Given some $d \ll |\X|$, we consider a feature matrix $\Phi$ of dimension $|\X| \times d$. For any $x \in \X$,  $\phi(x)=(\phi_1(x),...,\phi_d(x))^T$ is the \emph{feature vector} in state $x$. For any $j\in\{1,...,d\}$, we assume that the \emph{feature function} $\phi_j:\X \mapsto \R$ belongs to $\B(\X,L)$ for some finite $L$. Throughout the paper, and without loss of generality\footnote{This assumption is not fundamental: in theory, we can remove any set of features that makes the family linearly dependent; in practice, the algorithm we are going to describe can use the pseudo-inverse instead of the inverse.} we will make the following assumption.
\begin{assm}
\label{assm:indfeat}
The feature functions $(\phi_j)_{j \in \{1,...,d\}}$ are linearly independent. 
\end{assm}
Let $\S$ be the subspace generated by the vectors $(\phi_j)_{1 \le j
  \le d}$.  We consider the orthogonal projection $\Pi$ onto $\S$ with
respect to the $\mu$-weighed quadratic norm
$$\|f\|_\mu=\sqrt{\sum_{x\in \X} |f(x)|^2\mu(x)}.$$  It is well known
that this projection has the following closed form
\begin{align}
\label{eq:defPi}
\Pi=\Phi(\Phi^TD_\mu\Phi)^{-1}\Phi^TD_\mu,
\end{align} where $D_\mu$ is the
diagonal matrix with elements of $\mu$ on the diagonal.

The goal of LSTD($\lambda$) is to estimate a solution of the equation $v=\Pi T^{\lambda}v$, where the operator $T^\lambda$ is defined as a weighted arithmetic mean of the applications of the powers $T^i$ of the Bellman operator $T$ for all $i>1$:
\begin{align}
\label{eq:defTlambda}
\forall \lambda\in (0,1),~\forall v,~ T^{\lambda}v=(1-\lambda)\sum^{\infty}_{i=0}{\lambda^iT^{i+1}v}.
\end{align}
Note in particular that when $\lambda=0$, one has $T^\lambda=T$.
By using the facts that $T^i$ is affine and $\|P\|_\mu=1$ \citep{vanroy:1997,Nedic02leastsquares}, it can be seen that the operator $T^{\lambda}$ is a contraction mapping of modulus $\frac{(1-\lambda)\gamma}{1-\lambda\gamma} \le \gamma$; indeed,  
for any vectors $u,v$: 
\begin{align*}
\|T^{\lambda}u-T^{\lambda}v\|_\mu&\leq (1-\lambda)\|\sum^{\infty}_{i=0}{\lambda^i(T^{i+1}u- T^{i+1} v)}\|_\mu\\
& = (1-\lambda)\|\sum^{\infty}_{i=0}{\lambda^i(\gamma^{i+1}P^{i+1}u- \gamma^{i+1}P^{i+1} v)}\|_\mu \\
& \leq (1-\lambda)\sum^{\infty}_{i=0}{\lambda^i\gamma^{i+1} \|u - v\|_\mu}\\
&= \frac{(1-\lambda)\gamma}{1-\lambda\gamma}\|u-v\|_\mu.
\end{align*}
Since the orthogonal projector $\Pi$ is non-expansive with respect to $\mu$  \citep{vanroy:1997}, the operator $\Pi T^{\lambda}$ is contracting and thus the equation $v=\Pi T^{\lambda}v$ has one and only one solution, which we shall denote  $v_{LSTD(\lambda)}$ since it is what the LSTD($\lambda$) algorithm converges to \citep{Nedic02leastsquares}. As $v_{LSTD(\lambda)}$ belongs to the subspace $\S$, there exists a $\theta \in \R^d$ such that
\begin{align*}
v_{LSTD(\lambda)}&=\Phi\theta=\Pi T^{\lambda}\Phi\theta.
\end{align*}
If we replace $\Pi$ and $T^{\lambda}$ with their expressions (Equations \ref{eq:defPi} and \ref{eq:defTlambda}), it can be seen that $\theta$ is a solution of the equation $A\theta=b$ \citep{Nedic02leastsquares}, such that for any $i$,
\begin{align}
 A&=\Phi^T D_\mu(I-\gamma P)(I-\lambda\gamma P)^{-1}\Phi  =\E_{X_{-\infty} \sim \mu} \left[\sum^i_{k=-\infty}(\gamma\lambda)^{i-k}\phi(X_{k})(\phi(X_i)-\gamma\phi(X_{i+1}))^T \right] \label{eq:defA}\\
\mbox{and~~~} b& =\Phi^TD_\mu(I-\gamma\lambda P)^{-1}r=\E_{X_{-\infty} \sim \mu} \left[\sum^i_{k=-\infty}(\gamma\lambda)^{i-k}\phi(X_k)r(X_i)\right], \label{eq:defb}
\end{align}
where $u^T$ is the transpose of $u$. Since for all $x$, $\phi(x)$ is of dimension $d$, we see that $A$ is a $d \times d$ matrix and $b$ is a vector of size $d$.
Under Assumption~\ref{assm:indfeat}, it can be shown \citep{Nedic02leastsquares} that the matrix $A$ is invertible, and thus $v_{LSTD(\lambda)}=\Phi A^{-1}b$ is well defined.

The LSTD($\lambda$) algorithm that is the focus of this article is now precisely described. Given one trajectory $X_1,....,X_n$ generated by the Markov chain, 
the expectation-based expressions of $A$ and $b$ in Equations~\eqref{eq:defA}-\eqref{eq:defb} suggest to compute the following estimates:
\begin{align}
\hat A& =\frac{1}{n-1}\sum^{n-1}_{i=1} z_i(\phi(X_i)-\gamma\phi(X_{i+1}))^T \nonumber\\
\mbox{and~~~}\hat b &=\frac{1}{n-1}\sum^{n-1}_{i=1} z_ir (X_i)\nonumber\\
\label{eq:trace} \mbox{where~~~}z_i & =\sum^{i}_{k=1}{(\lambda\gamma)^{i-k}\phi(X_k)}
\end{align} 
is the so-called \emph{eligibility trace}.
The algorithm then returns $\hat v_{LSTD(\lambda)}=\Phi \hat \theta$ with\footnote{We will see in Theorem~\ref{thm:main} that $\hat A$ is invertible with high probability for a sufficiently big $n$.} $\hat \theta=\hat A^{-1}\hat b$, which is a (finite sample) approximation of $v_{LSTD(\lambda)}$. 
Using a variation of the law of large numbers, \citet{Nedic02leastsquares} showed that both $\hat A$ and $\hat b$ converge almost surely respectively to $A$ and $b$, which implies that $\hat v_{LSTD(\lambda)}$ tends to $v_{LSTD(\lambda)}$. The main goal of the remaining of the paper is to deepen this analysis: we shall estimate the rate of convergence of $\hat v_{LSTD(\lambda)}$ to $v_{LSTD(\lambda)}$, and bound the approximation error $\|\hat v_{LSTD(\lambda)}-v\|_\mu$ of the overall algorithm.

\section{Main results}\label{section:main}

This section contains our main results. Our key assumption for the analysis is that the Markov chain process that generates the states has some mixing property\footnote{A stationary ergodic Markov chain is \emph{always} $\beta$-mixing.}.
\begin{assm}
\label{assm:beta}
The process ${(X_n)}_{n\geq 1}$ is  $\beta$-mixing, in the sense that its $i^{\text{th}}$ coefficient
\begin{align*}
\beta_i=\sup_{t\geq 1}\E\left[\sup_{B\in \sigma(X^\infty_{t+i})}\left|P(B|\sigma(X^t_1))-P(B)\right|\right]
\end{align*}
tends to $0$ when $i$ tends to infinity, where $X^j_l=\{X_l,...,X_j\}$ for $j\geq l$ and $\sigma(X^j_l)$ is the sigma algebra generated by $X^j_l$ .
Furthermore, ${(X_n)}_{n\geq 1}$ mixes at an exponential decay rate with parameters $\overline \beta>0$, $b>0$, and $\kappa>0$ in the sense that $\beta_i\leq \overline{\beta}e^{-bi^{\kappa}}$.
\end{assm}  
Intuitively the $\beta_i$ coefficients measure the degree of dependence of samples separated by $i$ times step (the smaller the coefficient the more independence).
We are now ready to state the main result of the paper, that provides a rate of convergence of LSTD($\lambda$).

\begin{thm}
\label{thm:main}
Let Assumptions~\ref{assm:indfeat} and \ref{assm:beta} hold and let $X_1 \sim \mu$. For any $n \ge 1$ and $\delta \in (0,1)$, define:
\begin{align*}
I(n,\delta)&=32\Lambda(n,\delta)\max\left\{\frac{\Lambda(n,\delta)}{b},1\right\}^{\frac{1}{\kappa}}\\
\text{where}~~ \Lambda(n,\delta)&=\log\left(\frac{8n^2}{\delta}\right)+\log(\max\{4e^2,n\overline\beta\}).
\end{align*}
 Let $n_0(\delta)$ be the smallest integer such that
\begin{align}
\label{caracn0}
\forall n\geq n_0(\delta),~\frac{2dL^2}{(1-\gamma)\nu}\left[\frac{2}{\sqrt{n-1}}\sqrt{\left(\left\lceil\frac{\log(n-1)}{\log\left(\frac{1}{\lambda\gamma}\right)}\right\rceil+1\right)I(n-1,\delta)}+\right.\nonumber\\
\left.\frac{1}{(n-1)(1-\lambda\gamma)}+\frac{2}{(n-1)}\left\lceil\frac{\log(n-1)}{\log\left(\frac{1}{\lambda\gamma}\right)}\right\rceil\right]<1
\end{align}
where $\nu$ is the smallest eigenvalue of the Gram matrix $\Phi^T D_\mu\Phi$.
Then, for all $\delta$, with probability at least $1-\delta$, for all $n\geq n_0(\delta)$, $\hat A$ is invertible and we have:
\begin{align*} 
&\|v_{LSTD(\lambda)}-\hat v_{LSTD(\lambda)}\|_\mu \leq \frac{4V_{\text{max}}dL^2}{\sqrt{n-1}(1-\gamma)\nu}\sqrt{\left(1+\left\lceil\frac{\log(n-1)}{\log\left(\frac{1}{\lambda\gamma}\right)}\right\rceil\right)I(n-1,\delta)}+h(n,\delta)
\end{align*}
with $h(n,\delta)=\tilde O(\frac{1}{n})$.
\end{thm}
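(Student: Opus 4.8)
The plan is to treat $\hat v_{LSTD(\lambda)} - v_{LSTD(\lambda)}$ as a perturbation of a fixed-point problem. Writing $\theta = A^{-1}b$ and $\hat\theta = \hat A^{-1}\hat b$ and using $A\theta = b$, one obtains the exact identity $\hat\theta - \theta = \hat A^{-1}(\hat b - \hat A\theta)$. The key observation is that, since $b - A\theta = 0$, the vector $\hat b - \hat A\theta = \frac{1}{n-1}\sum_{i=1}^{n-1} z_i\,\psi_i$ with $\psi_i = r(X_i) - (\phi(X_i)-\gamma\phi(X_{i+1}))^T\theta$ is a \emph{centered} empirical average (its stationary expectation is exactly $b - A\theta = 0$), where $\psi_i$ is the temporal-difference error of $v_{LSTD(\lambda)}$ and is bounded in terms of $V_{\text{max}}$, $L$, $d$ and $\nu$. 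To pass from $\R^d$ to the $\mu$-norm I would use $\|\Phi x\|_\mu \le L\sqrt d\,\|x\|$ (each feature is bounded by $L$), so that
\[\|v_{LSTD(\lambda)} - \hat v_{LSTD(\lambda)}\|_\mu = \|\Phi(\hat\theta-\theta)\|_\mu \le L\sqrt d\,\|\hat A^{-1}\|\,\Big\|\tfrac{1}{n-1}\textstyle\sum_i z_i\psi_i\Big\|.\]
It then remains to (i) control $\|\hat A^{-1}\|$ and (ii) concentrate the centered average.

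For (i) I would first establish a coercivity estimate for $A$. Since the linear part $P^{(\lambda)}$ of $T^\lambda$ satisfies $\|P^{(\lambda)}\|_\mu \le \frac{(1-\lambda)\gamma}{1-\lambda\gamma}$ (this is exactly the contraction modulus already derived above) and since $A = \Phi^T D_\mu(I-P^{(\lambda)})\Phi$, one gets for every $x$ that $x^T A x = \langle \Phi x,(I-P^{(\lambda)})\Phi x\rangle_\mu \ge \frac{1-\gamma}{1-\lambda\gamma}\|\Phi x\|_\mu^2 \ge \frac{1-\gamma}{1-\lambda\gamma}\,\nu\,\|x\|^2$. This lower bound on the smallest singular value yields $\|A^{-1}\| \le \frac{1-\lambda\gamma}{(1-\gamma)\nu}$. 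Transferring this to $\hat A$ is exactly where the threshold $n_0(\delta)$ enters: once $\|\hat A - A\|$ is a small enough fraction of the coercivity constant $\frac{(1-\gamma)\nu}{1-\lambda\gamma}$ --- which is what the bracketed quantity in \eqref{caracn0} being $<1$ encodes --- $\hat A$ is invertible and $\|\hat A^{-1}\| \le \frac{2(1-\lambda\gamma)}{(1-\gamma)\nu}$ with high probability.

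The heart of the proof is the high-probability control, for a $\beta$-mixing chain, of both the centered average $\frac{1}{n-1}\sum_i z_i\psi_i$ and the deviation $\|\hat A - A\|$. Two difficulties must be handled simultaneously. First, the summands are not independent; I would use the independent-blocks technique together with a Bernstein-type inequality for exponentially $\beta$-mixing sequences, which is the source of the factor $I(n,\delta)$ gathering the mixing parameters $b,\kappa,\overline\beta$ and the logarithmic terms. Second, and more delicate, the eligibility trace $z_i=\sum_{k=1}^i(\lambda\gamma)^{i-k}\phi(X_k)$ has effectively unbounded memory. I would truncate it at depth $m=\lceil\frac{\log(n-1)}{\log(1/(\lambda\gamma))}\rceil$, so that the geometric tail $(\lambda\gamma)^m=O(1/n)$ contributes only an $O(1/n)$ bias; the truncated trace then depends on a window of $m+1$ consecutive states, and applying the blocking argument to these windowed summands is precisely what produces the $\sqrt{(1+m)\,I(n,\delta)}$ factor.

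Finally I would substitute the resulting estimates on $\|\hat A - A\|$ (into the invertibility step) and on $\|\frac{1}{n-1}\sum_i z_i\psi_i\|$ (into the perturbation bound of the first paragraph), collecting the leading $\tilde O(1/\sqrt n)$ contribution into the stated main term and absorbing all the truncation biases and block-boundary effects, each of order $1/n$, into $h(n,\delta)=\tilde O(1/n)$. The main obstacle is the concentration step: deriving a sharp Bernstein inequality for the windowed, memory-$m$ summands of a $\beta$-mixing chain while keeping the dependence on the truncation depth $m$ explicit --- so that it surfaces as the benign $\sqrt{1+m}$ factor rather than degrading the $1/\sqrt n$ rate --- is the technically demanding part, and it is where the interplay between $\lambda$, the mixing rate and the sample size $n$ becomes visible.
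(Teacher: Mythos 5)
Your proposal follows essentially the same route as the paper: the perturbation identity $\hat\theta-\theta=\hat A^{-1}(\hat b-\hat A\theta)$ (algebraically equivalent to the paper's form $A^{-1}(I+\epsilon_AA^{-1})^{-1}(\epsilon_b-\epsilon_A\theta)$ with $\epsilon_A=\hat A-A$, $\epsilon_b=\hat b-b$), the observation that $\hat b-\hat A\theta$ is an (almost) centered average of trace-weighted temporal-difference errors of $v_{LSTD(\lambda)}$ bounded via $\|\theta\|_2\le V_{\text{max}}/\sqrt\nu$, the coercivity bound $\|A^{-1}\|_2\le\frac{1-\lambda\gamma}{(1-\gamma)\nu}$ feeding the invertibility threshold $n_0(\delta)$, truncation of the trace at depth $m=\lceil\log(n-1)/\log(1/\lambda\gamma)\rceil$, and Yu's independent-blocks technique applied to the resulting $(m+1)$-windowed stationary process, which is indeed the source of the $\sqrt{(1+m)I(n,\delta)}$ factor. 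All of these are exactly the paper's steps (Lemmas~\ref{lemma:beta_relation}, \ref{lemma:concentration_matrix} and \ref{lemma:estimation_error}).

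The one step that is too crude to recover the statement as written is the passage to the $\mu$-norm. You bound $\|\Phi(\hat\theta-\theta)\|_\mu\le L\sqrt d\,\|\hat A^{-1}\|_2\,\|\hat b-\hat A\theta\|_2$ with $\|\hat A^{-1}\|_2\le\frac{2(1-\lambda\gamma)}{(1-\gamma)\nu}$, giving a leading coefficient proportional to $\frac{2L\sqrt{d}\,(1-\lambda\gamma)}{(1-\gamma)\nu}$. Since $\nu\le dL^2$, this exceeds the paper's coefficient $\frac{1-\lambda\gamma}{(1-\gamma)\sqrt\nu}$ by a factor $2\sqrt{dL^2/\nu}\ge 2$, so your route proves the $\tilde O(1/\sqrt n)$ rate but not the stated constant $\frac{4V_{\text{max}}dL^2}{(1-\gamma)\nu}$. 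The paper avoids this loss by never separating $\Phi$ from $A^{-1}$: it uses the identity $\Phi A^{-1}=(I-\Pi M)^{-1}\Phi M_\mu^{-1}$ with $M=(1-\lambda)\gamma P(I-\lambda\gamma P)^{-1}$ and $M_\mu=\Phi^TD_\mu\Phi$, together with $\|(I-\Pi M)^{-1}\|_\mu\le\frac{1-\lambda\gamma}{1-\gamma}$ and $\|\Phi M_\mu^{-1}x\|_\mu=\sqrt{x^TM_\mu^{-1}x}\le\nu^{-1/2}\|x\|_2$; the factor $(1-\epsilon_1/C)^{-1}$ coming from $(I+\epsilon_AA^{-1})^{-1}$ is then expanded as $1$ plus an $\tilde O(1/\sqrt n)$ remainder whose product with $\epsilon_2$ is pushed into $h(n,\delta)=\tilde O(1/n)$, so only the first-order term enters the leading constant. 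A second, minor imprecision: $\E[\hat b-\hat A\theta]$ is not exactly zero, because the empirical trace starts at $k=1$ rather than $k=-\infty$; the paper bounds this bias explicitly by $\epsilon_0'(n)=O(1/n)$, and your blanket absorption of ``truncation biases'' into $h(n,\delta)$ should be made to cover this term as well.
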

The constant $\nu$ is strictly positive under Assumption~\ref{assm:indfeat}. For all $\delta$, it is clear that the finite constant $n_0(\delta)$ exists since 
the l.h.s. of Equation~\eqref{caracn0} tends to $0$ when $n$ tends to infinity.
 As $\left(1+\left\lceil\frac{\log(n-1)}{\log\left(\frac{1}{\lambda\gamma}\right)}\right\rceil\right)I(n-1,\frac{\delta}{n^2})=\tilde O(1)$, we can see that LSTD($\lambda$) estimates $v_{LSTD(\lambda)}$ at the rate $\tilde O\left(\frac{1}{\sqrt n}\right)$. Finally, we can observe that since the function  $\lambda \mapsto \frac{1}{\log\left(\frac 1 {\lambda\gamma} \right)}$ is increasing, the rate of convergence deteriorates when $\lambda$ increases. 
This negative effect can be balanced by the fact that, as shown by the following result from the literature, the quality of  $v_{LSTD(\lambda)}$ improves when $\lambda$ increases.
\begin{thm}[\cite{vanroy:1997}]
\label{thm:errapp}
The approximation error satisfies\footnote{As suggested by V. Papavassilou \citep{vanroy:1997}, this bound can in fact be improved by using the Pythagorean theorem to $$\|v-v_{LSTD(\lambda)}\|_\mu  \leq \frac{1-\lambda\gamma}{\sqrt{(1-\gamma)(1+\gamma-2\lambda\gamma)}}\|v-\Pi v\|_\mu.$$ We keep the simple form of Theorem~\ref{thm:errapp} for simplicity.}:
\begin{align*}
\|v-v_{LSTD(\lambda)}\|_\mu  \le \frac{1-\lambda\gamma}{1-\gamma} \|v-\Pi v\|_\mu.
\end{align*}
\end{thm}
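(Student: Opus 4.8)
The statement is a classical fixed-point contraction argument, and I would prove it in a few lines using only the properties of $T^\lambda$ and $\Pi$ already established in Section~\ref{section:2}.

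First, I would record the elementary but crucial observation that the true value function $v$ is itself a fixed point of $T^\lambda$. Indeed, since $v=Tv$, one has $T^{i+1}v=v$ for every $i\ge 0$, so that
\begin{align*}
T^\lambda v=(1-\lambda)\sum_{i=0}^\infty \lambda^i T^{i+1}v=(1-\lambda)\sum_{i=0}^\infty \lambda^i v=v.
\end{align*}
In particular $\Pi v=\Pi T^\lambda v$, a relation I will use to compare $\Pi v$ with the LSTD($\lambda$) solution, which by definition satisfies $v_{LSTD(\lambda)}=\Pi T^\lambda v_{LSTD(\lambda)}$.

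Second, I would split the error through the projection $\Pi v$ using the triangle inequality,
\begin{align*}
\|v-v_{LSTD(\lambda)}\|_\mu\le \|v-\Pi v\|_\mu+\|\Pi v-v_{LSTD(\lambda)}\|_\mu,
\end{align*}
and then control the second term by combining the two fixed-point identities above with the non-expansiveness of $\Pi$ and the contraction property of $T^\lambda$ (of modulus $\frac{(1-\lambda)\gamma}{1-\lambda\gamma}$) established in Section~\ref{section:2}:
\begin{align*}
\|\Pi v-v_{LSTD(\lambda)}\|_\mu=\|\Pi T^\lambda v-\Pi T^\lambda v_{LSTD(\lambda)}\|_\mu\le \|T^\lambda v-T^\lambda v_{LSTD(\lambda)}\|_\mu\le \frac{(1-\lambda)\gamma}{1-\lambda\gamma}\,\|v-v_{LSTD(\lambda)}\|_\mu.
\end{align*}
Substituting this into the triangle inequality and rearranging gives $\frac{1-\gamma}{1-\lambda\gamma}\|v-v_{LSTD(\lambda)}\|_\mu\le\|v-\Pi v\|_\mu$, and the announced constant follows at once from $1-\frac{(1-\lambda)\gamma}{1-\lambda\gamma}=\frac{1-\gamma}{1-\lambda\gamma}$.

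There is no real obstacle here: the entire argument rests on facts already proved in the excerpt (that $v$ is a $T^\lambda$-fixed point, that $T^\lambda$ contracts with modulus $\frac{(1-\lambda)\gamma}{1-\lambda\gamma}$, and that $\Pi$ is non-expansive), and the only ``work'' is an elementary rearrangement. The one place where slightly more care is needed is the sharper bound quoted in the footnote: there I would replace the triangle inequality by the Pythagorean theorem, using that $v-\Pi v$ is $\mu$-orthogonal to $\S$, which contains both $\Pi v$ and $v_{LSTD(\lambda)}$, to write $\|v-v_{LSTD(\lambda)}\|_\mu^2=\|v-\Pi v\|_\mu^2+\|\Pi v-v_{LSTD(\lambda)}\|_\mu^2$. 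Bounding the last term by $\left(\frac{(1-\lambda)\gamma}{1-\lambda\gamma}\right)^2\|v-v_{LSTD(\lambda)}\|_\mu^2$ exactly as above and solving the resulting quadratic inequality yields a factor $\left(1-\left(\frac{(1-\lambda)\gamma}{1-\lambda\gamma}\right)^2\right)^{-1/2}$, which simplifies to $\frac{1-\lambda\gamma}{\sqrt{(1-\gamma)(1+\gamma-2\lambda\gamma)}}$ after the short computation $1-\left(\frac{(1-\lambda)\gamma}{1-\lambda\gamma}\right)^2=\frac{(1-\gamma)(1+\gamma-2\lambda\gamma)}{(1-\lambda\gamma)^2}$.
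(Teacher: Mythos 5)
Your proof is correct: the paper itself states Theorem~\ref{thm:errapp} without proof (it is quoted from the cited reference of Tsitsiklis and Van Roy, 1997), and your argument---observing that $v$ is a fixed point of $T^\lambda$, splitting via the triangle inequality through $\Pi v$, applying the non-expansiveness of $\Pi$ and the contraction of $T^\lambda$ with modulus $\frac{(1-\lambda)\gamma}{1-\lambda\gamma}$ established in Section~\ref{section:2}, and rearranging with $1-\frac{(1-\lambda)\gamma}{1-\lambda\gamma}=\frac{1-\gamma}{1-\lambda\gamma}$---is exactly the classical argument of that reference. Your derivation of the footnote's sharper constant is also sound: since $\Pi v$ and $v_{LSTD(\lambda)}$ both lie in $\S$ while $v-\Pi v$ is $\mu$-orthogonal to $\S$, the Pythagorean identity applies, and your factorization $(1-\lambda\gamma)^2-(1-\lambda)^2\gamma^2=(1-\gamma)(1+\gamma-2\lambda\gamma)$ yields precisely the stated constant.
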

Since the constant equals $1$ when $\lambda=1$, one recovers the well-known fact that LSTD(1) computes the orthogonal projection $\Pi v$ of $v$.
By using the triangle inequality, one deduces from Theorems~\ref{thm:main} and \ref{thm:errapp} the following global error bound.
 \begin{cor}\label{cor:globalerror}
Let the assumptions and notations of Theorem~\ref{thm:main} hold. For all $\delta$, with probability $1-\delta$, for all $n\geq n_0(\delta)$, the global error of LSTD($\lambda$) satisfies:
\begin{align*} 
\|v-\hat v_{LSTD(\lambda)}\|_\mu \leq \frac{1-\lambda\gamma}{1-\gamma} \|v-\Pi v\|_\mu+\frac{4V_{\text{max}}dL^2}{\sqrt{n-1}(1-\gamma)\nu}
&\left(\left(\left\lceil\frac{\log(n-1)}{\log\left(\frac{1}{\lambda\gamma}\right)}\right\rceil+1\right)I(n-1,\delta)\right)^{\frac{1}{2}}+h(n,\delta).
\end{align*}
\end{cor}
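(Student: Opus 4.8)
The plan is to obtain the global error bound as an immediate consequence of the two preceding results by inserting the limit $v_{LSTD(\lambda)}$ as an intermediate point and applying the triangle inequality. Concretely, I would write
\begin{align*}
\|v-\hat v_{LSTD(\lambda)}\|_\mu \le \|v-v_{LSTD(\lambda)}\|_\mu + \|v_{LSTD(\lambda)}-\hat v_{LSTD(\lambda)}\|_\mu,
\end{align*}
which is valid since $\|\cdot\|_\mu$ is a genuine norm on functions over $\X$. This decomposition cleanly separates the two distinct sources of error: a deterministic \emph{approximation} term measuring how far the fixed point $v_{LSTD(\lambda)}=\Phi A^{-1}b$ of $\Pi T^\lambda$ sits from the true value $v$, and a random \emph{estimation} term measuring how far the finite-sample output $\hat v_{LSTD(\lambda)}=\Phi\hat A^{-1}\hat b$ lies from that fixed point.

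Next I would bound each term with the appropriate result. The first term is handled directly by Theorem~\ref{thm:errapp}, which gives the deterministic bound $\|v-v_{LSTD(\lambda)}\|_\mu \le \frac{1-\lambda\gamma}{1-\gamma}\|v-\Pi v\|_\mu$; this contributes the leading constant-order term and carries no probabilistic content. The second term is controlled by Theorem~\ref{thm:main}: on the event of probability at least $1-\delta$ on which $\hat A$ is invertible for all $n\ge n_0(\delta)$, we have
\begin{align*}
\|v_{LSTD(\lambda)}-\hat v_{LSTD(\lambda)}\|_\mu \le \frac{4V_{\text{max}}dL^2}{\sqrt{n-1}(1-\gamma)\nu}\sqrt{\left(1+\left\lceil\frac{\log(n-1)}{\log\left(\frac{1}{\lambda\gamma}\right)}\right\rceil\right)I(n-1,\delta)}+h(n,\delta).
\end{align*}
Summing the two bounds term by term yields exactly the stated inequality, where the $\tilde O(1/\sqrt n)$ estimation term and the $h(n,\delta)=\tilde O(1/n)$ remainder are inherited verbatim from Theorem~\ref{thm:main}.

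The only subtlety worth checking is that the probabilistic qualifier and the quantifier over $n$ line up correctly across the two invocations. Since Theorem~\ref{thm:main} already delivers a \emph{uniform} statement — with probability at least $1-\delta$, the estimation bound holds simultaneously for all $n\ge n_0(\delta)$ and $\hat A$ is invertible throughout — while Theorem~\ref{thm:errapp} holds unconditionally for every $n$, the intersection poses no difficulty: both bounds are valid on the same event, for the same range of $n$, with the same confidence $1-\delta$. I therefore expect no genuine obstacle in this corollary; the entire analytical and concentration effort is concentrated in Theorem~\ref{thm:main}, and the present statement is a routine combination of a literature result with that main theorem via the triangle inequality.
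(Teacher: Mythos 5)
Your proposal is correct and follows exactly the paper's argument: the corollary is obtained by the triangle inequality through $v_{LSTD(\lambda)}$, bounding the approximation term with Theorem~\ref{thm:errapp} and the estimation term with Theorem~\ref{thm:main} on the same probability-$(1-\delta)$ event. Your additional check that the quantifiers over $n$ and the confidence level line up is a sound (if implicit in the paper) observation, and nothing further is needed.
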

\begin{remark}
The form of the result stated in Corollary~\ref{cor:globalerror} is slightly stronger than the one of \cite{lazaric:2012}:  for some property $P(n)$, our result if of the form ``$\forall \delta,~ \exists n_0(\delta),\text{such that}~ \forall n > n_0(\delta),~ P(n)$ holds with probability $1-\delta$'' while theirs is of the form ``$\forall n,~ \forall \delta,~ P(n)$ holds with probability $1-\delta$''.
Furthermore, under the same assumptions, the global error bound obtained by \cite{lazaric:2012}, in the restricted case where $\lambda=0$, has the following form:
\begin{align*}
\|\tilde v_{LSTD(0)}-v\|_\mu\leq \frac{4\sqrt{2}}{1-\gamma}\|v-\Pi v\|_\mu+ \tilde O\left(\frac{1}{\sqrt n}\right),
\end{align*}
where $\tilde v_{LSTD(0)}$ is the truncation (with $V_{\text{max}}$) of the pathwise LSTD solution\footnote{See \citep{lazaric:2012} for more details.}, while  we get in this analysis
\begin{align*}
\|\hat v_{LSTD(0)}-v\|_\mu\leq\frac{1}{1-\gamma}\|v-\Pi v\|_\mu+ \tilde O\left(\frac{1}{\sqrt n}\right).
\end{align*}
The term corresponding to the approximation error is a factor $4\sqrt{2}$ better with our analysis. Moreover, contrary to what we do here, the analysis of \cite{lazaric:2012} does not imply a rate of convergence for LSTD($\lambda$) (a bound on $\|v_{LSTD(0)}-\hat v_{LSTD(0)}\|_\mu$). Their arguments, based on a model of regression with Markov design, consists in \emph{directly} bounding the global error. Our two-step argument (bounding the estimation error with respect to $\|\cdot\|_\mu$, and then the approximation error with respect to $\|\cdot\|_\mu$) allows us to get a tighter result.
\end{remark}

As we have already mentioned, $\lambda=1$ minimizes the bound on the approximation error $\|v-v_{LSTD(\lambda)}\|$ (the first term in the r.h.s. in Corollary~\ref{cor:globalerror}) while $\lambda=0$ minimizes the bound on the estimation error $\|v_{LSTD(\lambda)}-\hat v_{LSTD(\lambda)}\|$ (the second term).
For any $n$, and for any $\delta$, there exists hence a value $\lambda^*$ that minimizes the global error bound by making an optimal compromise between the approximation and estimation errors. 
Figure~\ref{simulations} illustrates through simulations the interplay between $\lambda$ and $n$.
\begin{figure}
\begin{minipage}[c]{.49\linewidth}
\includegraphics[width=\linewidth]{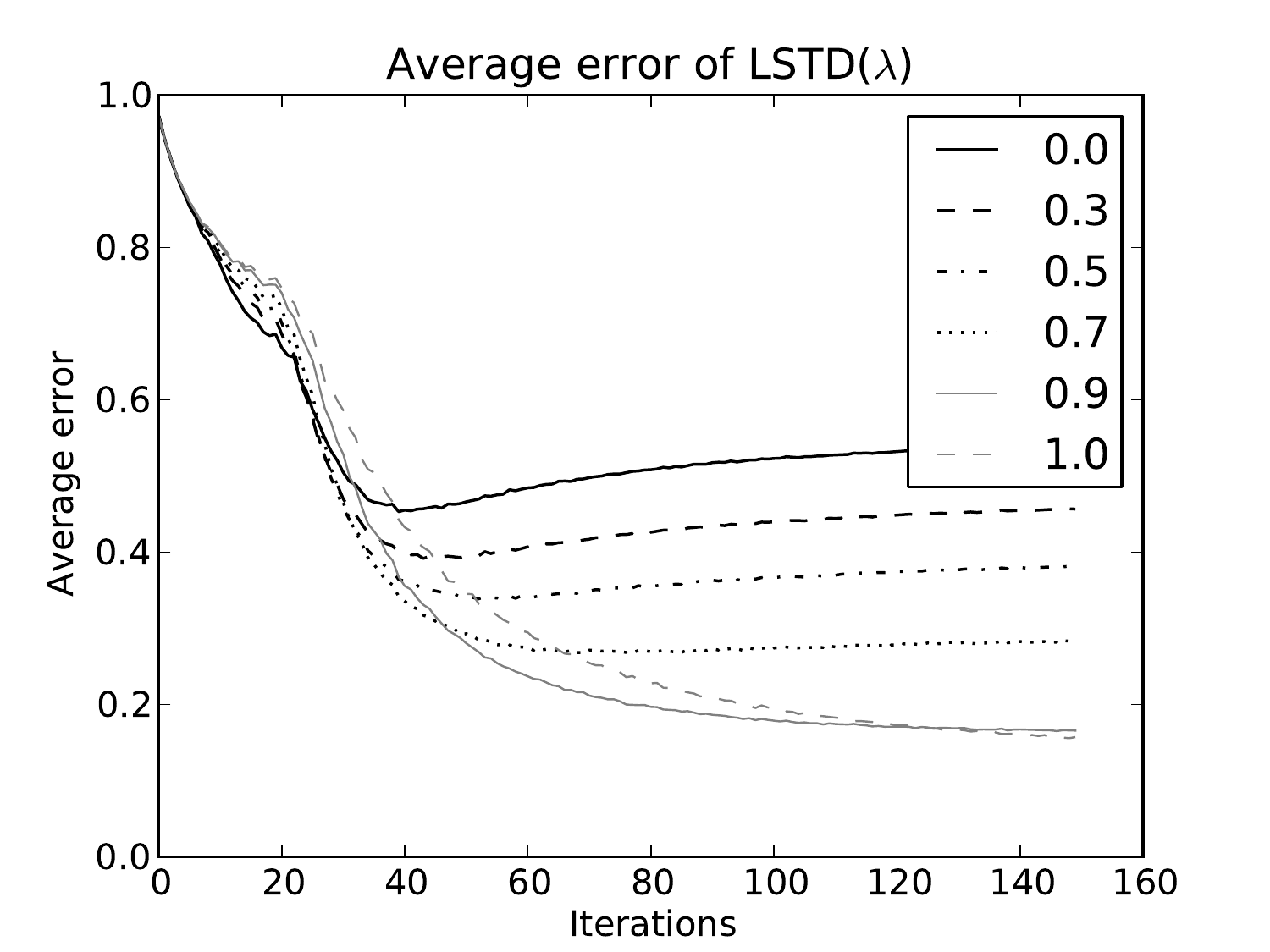}
\end{minipage}
\begin{minipage}[c]{.49\linewidth}
\includegraphics[width=\linewidth]{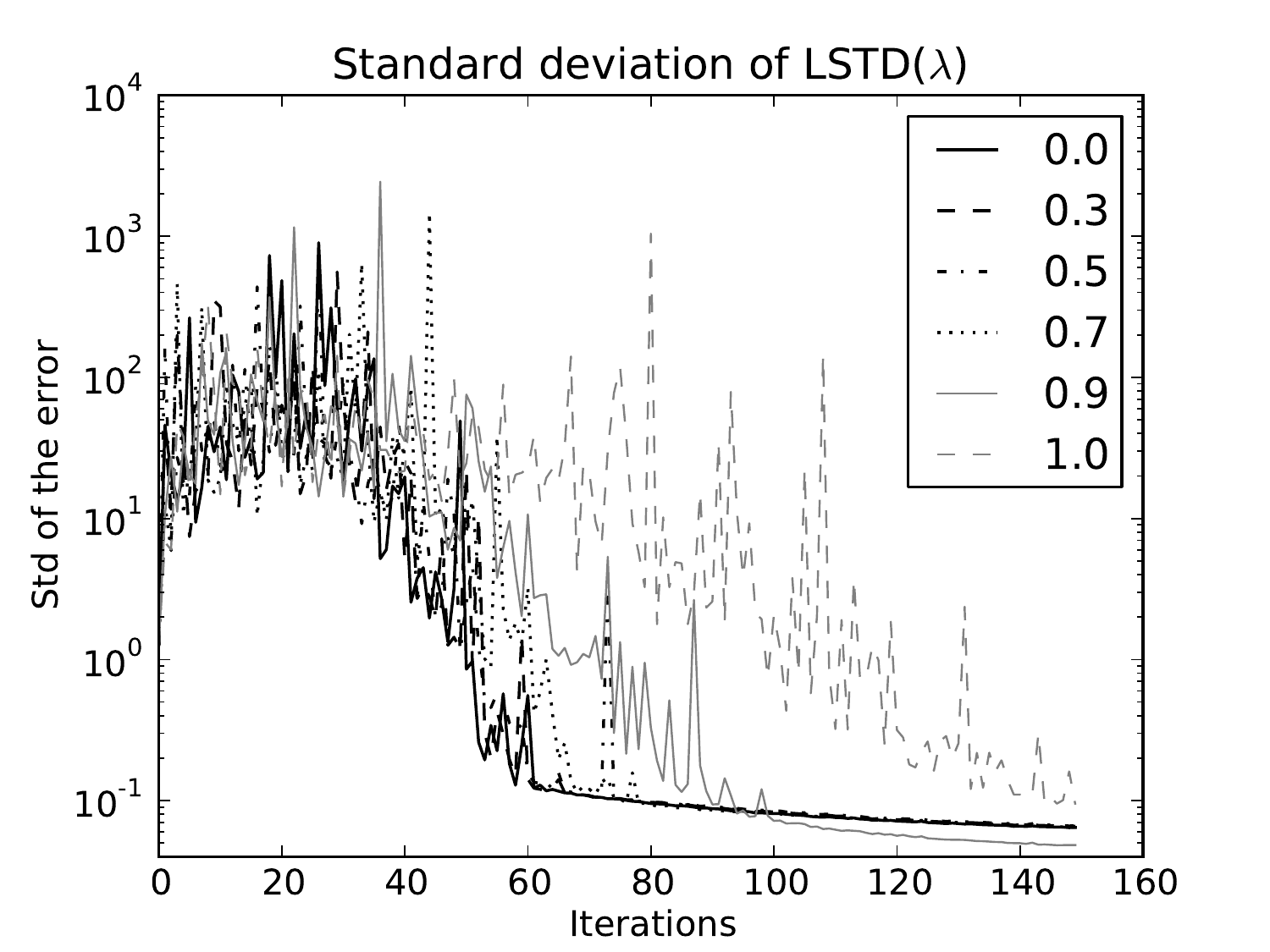}
\end{minipage}
\caption{\label{simulations}{\bf Learning curves for different values of $\lambda$}. We generated 1000 random Garnet MDPs \citep{Archibald:95} with $100$ states, random uniform rewards and $\gamma=0.99$. We also generated 1000 random feature spaces of dimension $20$ (by taking random matrices with random uniform entries). For all values of $\lambda \in \{0.0, 0.3, 0.5, 0.7, 0.9, 1.0\}$, we display (left) the average of the \emph{real} error and (right) the standard deviation with respect to the number of samples. Empirically, the best value of $\lambda$ appears to be a monotonic function of the number of samples $n$, that tends to~$1$ asymptotically. This is in accordance with our results in Corollary~\ref{cor:globalerror}.}
\end{figure}
The optimal value $\lambda^*$ depends on the process mixing parameters ($b$, $\kappa$ and $\overline{\beta}$) as well as on the quality of the policy space $\|v-\Pi v\|_\mu$, which are quantities that are usually unknown in practice. 
However, when the number of samples $n$ tends to infinity, it is clear that this optimal value $\lambda^*$ tends to $1$.

The next section contains a detailed proof of Theorem~\ref{thm:main}.

\section{Proof of Theorem~\ref{thm:main}}\label{section:proof}

In this section, we develop the arguments underlying the results of the previous section. The proof is organized in two parts. In a first preliminary part, we prove a concentration inequality for vector processes: a general result that  is based on infinitely-long eligibility traces. Then, in a second part, we actually prove Theorem~\ref{thm:main}: we apply this result to the error on estimating $A$ and $b$, and relate these errors with that on $v_{LSTD(\lambda)}$.

\subsection{Concentration inequality for infinitely-long trace-based estimates}
One of the first difficulties for the analysis of LSTD($\lambda$) is that the variables $A_i= z_i(\phi(X_i)-\gamma\phi(X_{i+1}))^T$ (respectively $b_i=z_ir (X_i)$) are not independent. Thus standard concentration results (like Lemma~\ref{lemma:hayes} we will describe in the Appendix~\ref{app:proofbetatrace}) for quantifying the speed at which the estimates converge to their limit cannot be used. As both terms $\hat A$ and $\hat b$ have the same structure, we will consider here a matrix that has the following general form:
\begin{align}
\label{eq:gform}
\hat G &= \frac{1}{n-1}\sum_{i=1}^{n-1} G_i \\
\mbox{with~~~}G_i &= z_i (\tau(X_i,X_{i+1}))^T
\end{align}
with $z_i$, defined in Equation~\eqref{eq:trace}, satisfies
$
z_i=\sum^{i}_{k=1}{(\lambda\gamma)^{i-k}\phi(X_k)}
$
and $\tau:\X^2 \mapsto \R^k$ is such that for $1\leq i\leq k$, $\tau_i$ belongs to $\B(\X^2,L')$ for some finite $L'$ \footnote{We denote $\X^i=\underbrace{X\times \X...\times \X}_{\text{i times}}$ for $i\geq 1$.}.  The variables $G_i$ are computed from one single trajectory, they are then significantly dependent. Nevertheless with the mixing assumption (Assumption \ref{assm:beta}), we can overcome this difficulty, and this by using a blocking technique due to \citet{Yu:1994}. This technique leads us back to the independent case. However the transition from the mixing case to the independent one requires stationarity (Lemma \ref{lemma:yu_decomp}) while  
$G_i$ as a $\sigma(\X^{i+1})$ measurable function  of the \emph{non-stationary} vector $(X_1,\dots,X_{i+1})$ does not define a \emph{stationary} process. In order to satisfy the \emph{stationarity} condition we will approximate $G_i$ by it truncated stationary version $G^m_i$. This is possible if we approximate 
$z_i$ by its $m$-truncated version:
$$
z_i^m=\sum^{i}_{k=\max(i-m+1,1)}{(\lambda\gamma)^{i-k}\phi(X_k)}.
$$
Since the function $\phi$ is bounded by some constant $L$ and the influence of the old events are controlled by some power of $\lambda \gamma <1$, it is easy to check that $\|z_i-z^m_i\|_\infty \le \frac{L}{1-\lambda\gamma} (\lambda \gamma)^m$. If we choose $m$ such that $m>\frac{\log (n-1)}{\log \frac{1}{\lambda\gamma}}$, we obtain $\|z_i-z^m_i\|_2 = O\left(\frac{1}{n}\right)$. 
Therefore it seems reasonable to approximate $\hat G$ with the process $\hat G^m$ satisfying
\begin{align}
\label{eq:gformtrunc}
\hat G^m &= \frac{1}{n-1}\sum_{i=1}^{n-1} G_i^m, \\
\mbox{with~~~}G^m_i &= z_i^m (\tau(X_i,X_{i+1}))^T.
\end{align}
For all $i \ge m$, $G_i^m$ is a $\sigma(\X^{m+1})$ measurable function of the \emph{stationary} vector $Z_i=(X_{i-m+1}, X_{i-m+2}$ $,\dots, X_{i+1})$. So we can apply the blocking technique of \citet{Yu:1994} to $G^m_i$, but before to do so we have to check out whether $G^m_i$ well defines a $\beta$-mixing process. It can be shown \citep{Yu:1994} 
that  any mesurable function $f$ of a $\beta$-mixing process is a $\beta^f$-mixing process with $\beta^f\leq \beta$, so we only have to prove that the process $Z_i$ is a $\beta$-mixing process. For that we need to  relate its $\beta$ coefficients to those of $(X_i)_{i \ge 1}$ on which Assumption~\ref{assm:beta} is made. This is the purpose of the following Lemma.
\begin{lemma}\label{lemma:beta_relation}
Let $(X_n)_{n\geq 1}$ be a $\beta$-mixing process, then $(Z_n)_{n\geq 1}=(X_{n-m+1}, X_{n-m+2}$ $,\dots, X_{n+1})_{n\geq 1}$ is a $\beta$-mixing process such that its $i^{th}$ $\beta$ mixing coefficient $\beta^Z_i$ satisfies $\beta^Z_i\leq\beta^X_{i-m}$.
\end{lemma}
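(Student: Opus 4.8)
The plan is to directly relate the $\beta$-mixing coefficients of the block process $(Z_n)_{n\geq 1}$ to those of the underlying chain $(X_n)_{n\geq 1}$ by tracking which coordinates of the original process appear in each block. The key observation is that $Z_n=(X_{n-m+1},\dots,X_{n+1})$ is a deterministic (indeed, coordinate-projection) function of the window $X_{n-m+1}^{n+1}$ of the original chain, so the dependence structure of $Z$ is inherited from that of $X$, but \emph{shifted} because a single block $Z_n$ already packs together $m+1$ consecutive original samples. I expect the entire argument to hinge on correctly accounting for this shift, which is exactly where the index $i-m$ in the claimed bound $\beta^Z_i\le\beta^X_{i-m}$ comes from.

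First I would unwind the definition of $\beta^Z_i$. By definition,
\begin{align*}
\beta^Z_i=\sup_{t\geq 1}\E\left[\sup_{B\in \sigma(Z^\infty_{t+i})}\left|P(B|\sigma(Z^t_1))-P(B)\right|\right].
\end{align*}
The task is to compare the two sigma-algebras appearing here with sigma-algebras generated by the original chain. Since $Z_s$ is a measurable function of $X_{s-m+1}^{s+1}$, the past block sigma-algebra satisfies $\sigma(Z_1^t)\subseteq\sigma(X_{2-m}^{t+1})\subseteq\sigma(X_1^{t+1})$ (the blocks $Z_1,\dots,Z_t$ only involve original samples with index at most $t+1$), and the future block sigma-algebra satisfies $\sigma(Z_{t+i}^\infty)\subseteq\sigma(X_{t+i-m+1}^\infty)$. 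The crucial point is the gap between the largest original index used by the conditioning (namely $t+1$) and the smallest original index used by the event set (namely $t+i-m+1$): this gap is $(t+i-m+1)-(t+1)=i-m$.

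Next I would invoke the general monotonicity of $\beta$-coefficients under coarsening of the sigma-algebras. Because $\beta^X$ is defined as a supremum over events $B$ in the future sigma-algebra of the total-variation-type deviation, enlarging the conditioning information and enlarging the event class can only increase the inner supremum; conversely, replacing $\sigma(Z_1^t)$ and $\sigma(Z_{t+i}^\infty)$ by the \emph{larger} original-chain sigma-algebras $\sigma(X_1^{t+1})$ and $\sigma(X_{t+i-m+1}^\infty)$ dominates the $Z$-expression pointwise. Writing $t'=t+1$, the resulting quantity is precisely the $(i-m)^{\text{th}}$ $\beta$-coefficient of $(X_n)$ evaluated at shift $t'$, since the separation between the past window ending at index $t'$ and the future window starting at index $t'+(i-m)$ equals $i-m$. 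Taking the supremum over $t\geq 1$ (hence over $t'\geq 2$, which is dominated by the supremum over $t'\geq 1$ in the definition of $\beta^X_{i-m}$) yields $\beta^Z_i\le\beta^X_{i-m}$, and in particular $\beta^Z_i\to 0$, so $(Z_n)$ is $\beta$-mixing.

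The main obstacle, and the step deserving the most care, is the bookkeeping of indices: one must verify that conditioning on the \emph{whole} past block process $\sigma(Z_1^t)$ really is contained in $\sigma(X_1^{t+1})$ (the ``$+1$'' coming from the trailing coordinate $X_{s+1}$ in each block) and that the future events live in $\sigma(X_{t+i-m+1}^\infty)$, so that the effective separation is $i-m$ rather than $i$ or $i-m-1$; an off-by-one error here would change the statement of the lemma. The only analytic ingredient beyond this index accounting is the elementary monotonicity of the $\beta$-mixing functional under inclusion of sigma-algebras, which I would state and use without reproving. Note this bound is only meaningful for $i>m$; for $i\le m$ one simply uses $\beta^Z_i\le 1$, which suffices for the later summation arguments.
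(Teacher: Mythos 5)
Your proof is correct and follows essentially the same route as the paper: identify the sigma-algebras generated by the block process with (sub-)sigma-algebras of the original chain, and observe that packing $m+1$ consecutive samples into each block shifts the effective separation from $i$ to $i-m$. The only minor difference is that the paper establishes equality of $\sigma(Z_1,\dots,Z_t)$ (and of the future sigma-algebra) with the corresponding sigma-algebras of $(X_n)_{n\ge 1}$, whereas you use one-sided inclusions together with the standard --- and correctly invoked --- monotonicity of the $\beta$-coefficient in both of its sigma-field arguments.
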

\begin{proof}
Let $\Gamma=\sigma(Z_1,...,Z_t)$, by definition we have
\begin{align*}
\Gamma=\sigma(Z^{-1}_j(B):j\in \{1,...,t\},B\in \sigma(\X^{m+1})). 
\end{align*}
 For all $j\in \{1,...,t\}$ we have 
\begin{align*}
Z^{-1}_j(B)=\left\{\omega\in \Omega, Z_j(\omega)\in B\right\}.
\end{align*}
For $B=B_0\times...\times B_m$, we observe that
\begin{align*}
Z^{-1}_j(B)=\{\omega\in \Omega, X_j(\omega)\in B_0,...,X_{j+m}(\omega)\in B_m\}.
\end{align*}
Then we have 
\begin{align*}
\Gamma=\sigma(X^{-1}_j(B):j\in \{1,...,t+m\},B\in \sigma(\X))=\sigma(X_1,...,X_{t+m}).
\end{align*}
Similarly we can prove that $\sigma(Z^\infty_{t+i})=\sigma(X^\infty_{t+i})$. Then let $\beta^X_i$ be the $i^{th}$ $\beta$-mixing coefficient of the process $(X_n)_{n\geq 1}$, we have
\begin{align*}
\beta^X_i=\sup_{t\geq 1}\E\left[\sup_{B\in \sigma(X^\infty_{t+i})}\left|P(B|\sigma(X_1,...,X_t))-P(B)\right|\right].
\end{align*}
Similarly for the process $(Z_n)_{n\geq 1}$ we can see that
\begin{align*}
\beta^Z_i=\sup_{t\geq 1}\E\left[\sup_{B\in \sigma(Z^\infty_{t+i})}\left|P(B|\sigma(Z_1,...,Z_t))-P(B)\right|\right].
\end{align*}
By applying what we developped above we obtain
\begin{align*}
\beta^Z_i=\sup_{t\geq 1}\E\left[\sup_{B\in \sigma(X^\infty_{t+i})}\left|P(B|\sigma(X_1,...,X_{t+m}))-P(B)\right|\right].
\end{align*}
Denote $t'=t+m$ then for $i>m$ we have
\begin{align*}
\beta^Z_i&=\sup_{t'\geq m+1}\E\left[\sup_{B\in \sigma(X^\infty_{t'+i-m})}\left|P(B|\sigma(X_1,...,X_{t'}))-P(B)\right|\right]\\
&\leq \beta^X_{i-m}.
\end{align*}
~\vspace{-1.1cm}

\end{proof}
Let $\|.\|_F$ denote the Frobenius norm satisfying : for $M\in \R^{d\times k}$, $\|M\|^2_F=\sum^d_{l=1}\sum^k_{j=1} (M_{l,j})^2$. We are now ready to prove the concentration inequality for the infinitely-long-trace $\beta$-mixing process $\hat G$. 
\begin{lemma}\label{lemma:concentration_matrix}
Let Assumptions~\ref{assm:indfeat} and \ref{assm:beta} hold and let $X_1 \sim \mu$. 
Define the $d\times k$ matrix $G_i$ such that
\begin{align}
G_i=\sum^{i}_{k=1}{(\lambda\gamma)^{i-k}\phi(X_k)} (\tau(X_{i},X_{i+1}))^T.
\end{align}
Recall that $\phi=(\phi_1,\dots,\phi_d)$ is such that for all $j$, $\phi_j \in \B(\X,L)$, and that $\tau \in \B(\X^2,L')$. Then for all $\delta$ in $(0,1)$, with probability $1-\delta$,
\begin{align*}
\left\| \frac{1}{n-1} \sum_{i=1}^{n-1}{G_i} - \frac{1}{n-1}\sum_{i=1}^{n-1}{\E[G_i]}\right\|_2 \le \frac{2\sqrt{d\times k}LL'}{(1-\lambda\gamma)\sqrt{n-1}}\sqrt{\left(\left\lceil\frac{\log(n-1)}{\log\left(\frac{1}{\lambda\gamma}\right)}\right\rceil+1\right)J(n-1,\delta)} + \epsilon(n),
\end{align*}
where  
\begin{align*}
J(n,\delta)&=32\Gamma(n,\delta)\max\left\{\frac{\Gamma(n,\delta)}{b},1\right\}^{\frac{1}{\kappa}},\\
\Gamma(n,\delta)&=\log\left(\frac{2}{\delta}\right)+\log(\max\{4e^2,n\overline\beta\}),\\
\epsilon(n)&=2\left\lceil\frac{\log(n-1)}{\log\left(\frac{1}{\lambda\gamma}\right)}\right\rceil\frac{\sqrt{d\times k}LL'}{(n-1)(1-\lambda\gamma)}.
\end{align*}
\end{lemma}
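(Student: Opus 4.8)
```latex
\textbf{Proof proposal.} The plan is to reduce the concentration of the infinitely-long trace estimate $\hat G$ to a concentration result for an \emph{independent} block process, using the $m$-truncation and the blocking technique of \citet{Yu:1994}. I would proceed in four stages.

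\emph{Stage 1: Truncation.} First I would replace $G_i$ by its $m$-truncated version $G_i^m = z_i^m (\tau(X_i,X_{i+1}))^T$, choosing $m = \left\lceil \frac{\log(n-1)}{\log(1/\lambda\gamma)}\right\rceil$ so that, as noted in the text, $\|z_i - z_i^m\|_2 = O(1/n)$. Each coordinate of $\tau$ is bounded by $L'$ and $\phi_j$ by $L$, so I can bound $\|G_i - G_i^m\|_F$ explicitly: since $z_i - z_i^m$ has $\ell_\infty$-norm at most $\frac{L}{1-\lambda\gamma}(\lambda\gamma)^m$, the difference of averages $\|\hat G - \hat G^m\|_2$ contributes a deterministic term of order $\frac{\sqrt{d\times k}\,LL'}{(n-1)(1-\lambda\gamma)}$. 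Doing this both for $G_i$ and for $\E[G_i]$ and summing yields exactly the residual $\epsilon(n)$ stated in the lemma. This isolates the genuinely stochastic part, namely the deviation of $\hat G^m$ from its mean.

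\emph{Stage 2: From mixing to independence.} For $i \ge m$, each $G_i^m$ is a $\sigma(\X^{m+1})$-measurable function of the stationary block $Z_i = (X_{i-m+1},\dots,X_{i+1})$. By Lemma~\ref{lemma:beta_relation} the process $(Z_i)$ is $\beta$-mixing with $\beta^Z_i \le \beta^X_{i-m}$, and since a measurable function of a $\beta$-mixing process is again $\beta$-mixing with no larger coefficients, $(G_i^m)$ is $\beta$-mixing. I would then invoke the Yu decomposition (Lemma~\ref{lemma:yu_decomp}), which partitions the $n-1$ terms into alternating blocks of a chosen length and constructs an independent-block surrogate sequence whose distribution differs from the true one by a total-variation cost controlled by the $\beta$ coefficients — here bounded via Assumption~\ref{assm:beta} by $\overline\beta e^{-b\,(\text{block length})^\kappa}$. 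This is the step that converts the dependent sum into a sum of (blockwise) independent matrix-valued variables, at the price of an additive mixing penalty that will be absorbed into the $\log(\max\{4e^2,n\overline\beta\})$ term of $\Gamma$.

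\emph{Stage 3: Concentration in the independent case.} On the independent-block surrogate I would apply the vector/matrix Azuma-type inequality (Lemma~\ref{lemma:hayes}) to the Frobenius norm of the centered sum, using that each block sum of $G_i^m$ is bounded: the trace $z_i^m$ has $\ell_2$-norm at most $\frac{\sqrt d\,L}{1-\lambda\gamma}$ and $\tau$ contributes $\sqrt k\,L'$, so $\|G_i^m\|_F \le \frac{\sqrt{d\times k}\,LL'}{1-\lambda\gamma}$. Optimizing the block length $\mu_n$ against the mixing penalty from Stage 2 is what produces the $\max\{\Gamma/b,1\}^{1/\kappa}$ factor in $J(n,\delta)$ and the overall $\frac{1}{\sqrt{n-1}}$ rate; the factor $\left(\lceil\log(n-1)/\log(1/\lambda\gamma)\rceil + 1\right) = m+1$ under the square root reflects that consecutive $G_i^m$ overlap in $m$ coordinates of the underlying chain, so grouping into $m$-separated subsequences (or equivalently the effective variance) carries this multiplicative cost.

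\emph{Main obstacle.} The delicate part is Stage 2--3 bookkeeping: correctly propagating the three error contributions — the truncation bias $\epsilon(n)$, the mixing total-variation penalty, and the independent-case tail — through a single choice of block length so that they combine into the clean closed form $J(n,\delta)$ with the stated constant $32$ and the exponent $1/\kappa$. In particular, one must verify that for $i < m$ the finitely many boundary terms $G_i^m$ do not spoil the bound and that the stationarity requirement of Lemma~\ref{lemma:yu_decomp} is met only on the valid range $i \ge m$, handling the initial segment separately. Getting the constant and the $\log(n^2)$-type dependence exactly right, rather than merely up to $\tilde O(\cdot)$, is where the calculation requires the most care.
```
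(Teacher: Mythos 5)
Your plan follows essentially the same route as the paper's proof: truncate the traces at $m=\lceil\log(n-1)/\log(1/\lambda\gamma)\rceil$ and isolate the boundary terms $i<m$ into the $\tilde O(1/n)$ residual $\epsilon(n)$, pass from the $\beta$-mixing stationary process $(G_i^m)_{i\ge m}$ to independent blocks via Lemma~\ref{lemma:beta_relation} and Lemma~\ref{lemma:yu_decomp}, apply the martingale inequality of Lemma~\ref{lemma:hayes} to the block sums in Frobenius norm, and balance the block length against the mixing penalty to obtain $J(n,\delta)$ with the $(m+1)$ factor. Your identification of the key quantities (the bound $\frac{\sqrt{dk}LL'}{1-\lambda\gamma}$ on $\|G_i^m\|_F$, the absorption of the mixing penalty into $\max\{4e^2,n\overline\beta\}$, and the origin of the $m+1$ factor in the block-separation cost) matches the paper, so the proposal is correct modulo the bookkeeping you yourself flag.
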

Note that with respect to the quantities $I$ and $\Lambda$ introduced in Theorem~\ref{thm:main}, the quantities we introduce here
are such that $J(n,\delta)=I(n,4n^2 \delta)$ and $\Gamma(n,\delta)=\Lambda(n,4n^2 \delta)$.
\begin{proof}
The proof amounts to show that i) the approximation due to considering the estimate $\hat G^m$ with truncated traces instead of $\hat G$ is bounded by $\epsilon(n)$, and then ii) to apply the block technique of \citet{Yu:1994} in a way somewhat similar to---but technically slightly more involved than---what \cite{lazaric:2012} did for LSTD(0). We defer the technical arguments to Appendix~\ref{app:proofbetatrace} for readability.
\end{proof}
Using a very similar proof, we can derive a (simpler) general concentration inequality for $\beta$-mixing processes:
\begin{lemma}\label{lemma:1}
\label{concvec}
Let  $Y=(Y_1,\dots,Y_n)$ be random variables taking their values in the space ${\R}^d$, generated from a stationary exponentially $\beta$-mixing process 
with parameters  $\overline \beta$, $b$ and $\kappa$, and such that for all  $i$, $\|Y_i-\E[Y_i]\|_2 \le B_2$ almost surely. 
Then for all $\delta>0$,
\begin{align*}
\P \left\{\left\| \frac{1}{n} \sum_{i=1}^{n}{Y_i}-\frac{1}{n} \sum_{i=1}^{n}{\E[Y_i]} \right\|_2 \le \frac{B_2}{\sqrt{n}} \sqrt{J(n,\delta)} \right\}> 1-\delta
\end{align*}
where $J(n,\delta)$ is defined as in Lemma~\ref{lemma:concentration_matrix}.
\end{lemma}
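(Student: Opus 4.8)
The plan is to follow exactly the route sketched for the matrix case in Lemma~\ref{lemma:concentration_matrix}, but without the preliminary truncation step, since here the summands $Y_i$ are already a stationary function of finitely many coordinates and no eligibility trace has to be cut off. The two ingredients are the blocking device of \citet{Yu:1994} (used through the decomposition of Lemma~\ref{lemma:yu_decomp}), which trades the dependence of the $\beta$-mixing sequence against its mixing coefficients, and a Hoeffding-type concentration inequality for sums of independent $\R^d$-valued random vectors (Lemma~\ref{lemma:hayes}), which will be applied to the resulting independent blocks.

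Concretely, I would first center the variables, writing $\bar Y_i = Y_i - \E[Y_i]$ with $\|\bar Y_i\|_2 \le B_2$. Fixing a block length $a$ and a number of blocks, I split $\{1,\dots,n\}$ into $2\mu$ consecutive blocks of length $a$ (with $n = 2\mu a$, up to a negligible remainder block treated separately), and collect the odd-indexed and even-indexed blocks into two groups. Within each group the $\mu$ block-sums $S_1,\dots,S_\mu$ are separated by gaps of length $a$, so Lemma~\ref{lemma:yu_decomp} lets me replace them by independent copies $\tilde S_1,\dots,\tilde S_\mu$ (each $\tilde S_j$ distributed as $S_j$) at the cost of an additive term $(\mu-1)\beta_a$ in probability. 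A union bound over the two groups then yields
\[
\P\left(\left\|\frac1n\sum_{i=1}^{n}\bar Y_i\right\|_2 > \epsilon\right) \le 2\,\P_{\mathrm{indep}}\left(\left\|\sum_{j=1}^{\mu}\tilde S_j\right\|_2 > \frac{n\epsilon}{2}\right) + 2(\mu-1)\beta_a.
\]

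Next I would control the independent term. Each $\tilde S_j$ is a sum of $a$ centered vectors, hence $\|\tilde S_j\|_2 \le a B_2$, and Lemma~\ref{lemma:hayes} applied to the $\mu$ independent vectors gives a bound of the form $2e^2\exp\bigl(-\tfrac{(n\epsilon/2)^2}{2\mu(aB_2)^2}\bigr)$. Using $n = 2\mu a$ the exponent simplifies to $-\mu\epsilon^2/(2B_2^2)$, so that
\[
\P\left(\left\|\frac1n\sum_{i=1}^{n}\bar Y_i\right\|_2 > \epsilon\right) \le 4e^2\exp\left(-\frac{\mu\epsilon^2}{2B_2^2}\right) + 2(\mu-1)\beta_a.
\]
It then remains to make both terms at most $\delta/2$. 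Bounding $\beta_a \le \overline\beta e^{-ba^\kappa}$ and using $\mu \le n/2$, I would choose the block length $a = \lceil \max\{\Gamma(n,\delta)/b,\,1\}^{1/\kappa}\rceil$, so that $b a^\kappa \ge \Gamma(n,\delta) \ge \log(\max\{4e^2,n\overline\beta\}) + \log(2/\delta)$ and the mixing term is $\le \delta/2$ in either branch of the maximum. With this $a$ the concentration term is $\le \delta/2$ as soon as $\epsilon^2 \ge 2B_2^2\log(8e^2/\delta)/\mu = 4aB_2^2\log(8e^2/\delta)/n$; since $\log(8e^2/\delta) \le \Gamma(n,\delta)$ and $a \le \max\{\Gamma/b,1\}^{1/\kappa}$ up to the ceiling, this is implied by $\epsilon = \tfrac{B_2}{\sqrt n}\sqrt{J(n,\delta)}$ with $J = 32\,\Gamma\max\{\Gamma/b,1\}^{1/\kappa}$, which is the claimed bound.

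I expect the main difficulty to be purely in the bookkeeping of the last step: reproducing the precise constant $32$ and the factor $\max\{\Gamma/b,1\}^{1/\kappa}$ requires carefully inverting the exponential mixing rate $\beta_i \le \overline\beta e^{-bi^\kappa}$ to fix the block length, and absorbing the ceiling operations and the leftover remainder block without degrading the rate. The concentration and blocking steps themselves are routine given Lemmas~\ref{lemma:hayes} and~\ref{lemma:yu_decomp}; the delicate point is that the two competing error terms depend on the same free parameters $(\mu,a,\epsilon)$, so the optimization must be arranged so that the single logarithmic budget $\Gamma(n,\delta)$ simultaneously controls the deviation probability and the independence defect.
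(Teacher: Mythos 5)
Your proposal is correct and follows essentially the same route as the paper: the paper treats this lemma as a simplified corollary of the proof of Lemma~\ref{lemma:concentration_matrix} (the blocking technique of Yu, Lemma~\ref{lemma:yu_decomp} to pass to independent blocks, and the martingale inequality of Lemma~\ref{lemma:hayes}), with the truncation step dropped exactly as you do. The only divergence is in the final tuning: the paper balances the two exponents by taking a block length of the form $\lceil C_2 n\epsilon^2/b\rceil^{1/(\kappa+1)}$ and then solving for $\epsilon(\delta)$, whereas you fix $a=\lceil\max\{\Gamma(n,\delta)/b,1\}^{1/\kappa}\rceil$ up front to kill the mixing term and then solve the concentration term for $\epsilon$ --- but both yield the stated $J(n,\delta)=32\,\Gamma(n,\delta)\max\{\Gamma(n,\delta)/b,1\}^{1/\kappa}$ (yours with some slack in the constant).
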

\begin{remark}
If the variables $Y_i$ were independent, we would have $\beta_i=0$ for all $i$, that is we could choose $\overline \beta=0$ and $b=\infty$, so that $J(n,\delta)$ reduces to $32 \log\frac{8 e^2}{\delta}=O(1)$ and we recover standard results such as the one we describe in Lemma~\ref{lemma:hayes} we will describe in the Appendix~\ref{app:proofbetatrace}. Furthermore, the price to pay for having a $\beta$-mixing assumption (instead of simple independence) lies in the extra coefficient $J(n,\delta)$ which is $\tilde O(1)$; in other words, it is rather mild. 
\end{remark}

\subsection{Proof of Theorem~\ref{thm:main}}

After having introduced the corresponding concentration inequality for infinitely-long trace-based estimates we are ready to prove Theorem~\ref{thm:main}. The first important step to Theorem~\ref{thm:main} proof consists in deriving the following lemma.
\begin{lemma}\label{lemma:estimation_error}
Write $\epsilon_A=\hat A-A$, $\epsilon_b=\hat b-b$ and $\nu$ the smallest eigenvalue of the matrix $\Phi^T D_\mu \Phi$. For all $\lambda\in (0,1)$, the estimate $\hat v_{LSTD(\lambda)}$ satisfies\footnote{When $\hat A$ is not invertible, we take $\hat v_{LSTD(\lambda)}=\infty$ and the inequality is always satisfied since, as we will see shortly,  the invertiblity of $\hat A$ is equivalent to that of $(I+\epsilon_AA^{-1}).$}:
\begin{equation*}
\|v_{LSTD(\lambda)}-\hat v_{LSTD(\lambda)}\|_\mu\leq \frac{1-\lambda\gamma}{(1-\gamma)\sqrt{\nu}}\|(I+\epsilon_AA^{-1})^{-1}\|_2\|\epsilon_A\theta-\epsilon_b\|_2 \label{maineq}.
\end{equation*}
Furthermore,  if for some $\epsilon$ and $C$,  $\|\epsilon_A\|_2 \le \epsilon < C \le \frac{1}{\|A^{-1}\|_2}$, then $\hat A$ is invertible and 
\begin{equation*}
\|(I+\epsilon_AA^{-1})^{-1}\|_2 \le \frac{1}{1-\frac{\epsilon}{C}}.
\end{equation*}
\end{lemma}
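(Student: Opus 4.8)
The plan is to first turn the statement into a purely algebraic identity for the parameter error, then to establish the single nontrivial ingredient---an operator-norm bound for $\Phi A^{-1}$ from $(\R^d,\|\cdot\|_2)$ into the space of functions equipped with $\|\cdot\|_\mu$---and finally to dispatch the invertibility claim by a standard Neumann-series argument.

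First I would exploit the defining relations $A\theta=b$ and $\hat A\hat\theta=\hat b$. Writing $\hat A = A+\epsilon_A=(I+\epsilon_A A^{-1})A$ and $\hat b=b+\epsilon_b$, a direct substitution gives $\hat A(\hat\theta-\theta)=\epsilon_b-\epsilon_A\theta$, hence
\[
v_{LSTD(\lambda)}-\hat v_{LSTD(\lambda)}=\Phi(\theta-\hat\theta)=\Phi A^{-1}(I+\epsilon_A A^{-1})^{-1}(\epsilon_A\theta-\epsilon_b).
\]
Taking $\|\cdot\|_\mu$ and pulling the factor $(I+\epsilon_A A^{-1})^{-1}(\epsilon_A\theta-\epsilon_b)$ out in $\|\cdot\|_2$ immediately reduces the first inequality to the claim
\[
\|\Phi A^{-1}x\|_\mu\le \frac{1-\lambda\gamma}{(1-\gamma)\sqrt{\nu}}\,\|x\|_2\qquad\text{for all }x\in\R^d.
\]

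The core of the proof is this last bound, and I expect it to be the main obstacle, since it must convert the Euclidean geometry of the parameter space into the $\mu$-geometry of functions while bringing out the contraction factor $\frac{1-\gamma}{1-\lambda\gamma}$. I would substitute $w=A^{-1}x$ and instead prove the equivalent lower bound $\|Aw\|_2\ge \frac{(1-\gamma)\sqrt{\nu}}{1-\lambda\gamma}\|\Phi w\|_\mu$. Using the closed form $A=\Phi^T D_\mu M\Phi$ with $M=(I-\gamma P)(I-\lambda\gamma P)^{-1}$, the key algebraic observation is that $M=I-P^\lambda$, where $P^\lambda=(1-\lambda)\gamma P(I-\lambda\gamma P)^{-1}$ is exactly the linear part of $T^\lambda$; this is a one-line computation clearing the denominator $(I-\lambda\gamma P)^{-1}$. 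Writing $G=\Phi^T D_\mu\Phi$ for the Gram matrix and $\Phi s=\Pi M\Phi w$, the orthogonality of $\Pi$ gives $\|Aw\|_2=\sup_{\|u\|_2=1}\langle\Phi u,\Pi M\Phi w\rangle_\mu=\|Gs\|_2\ge\sqrt{\nu}\,\|\Phi s\|_\mu=\sqrt{\nu}\,\|\Pi M\Phi w\|_\mu$, where the middle inequality uses $\|Gs\|_2=\|G^{1/2}(G^{1/2}s)\|_2\ge\sqrt{\nu}\,\|G^{1/2}s\|_2$ together with $\|G^{1/2}s\|_2=\|\Phi s\|_\mu$. Finally, since $\Phi w\in\S$ one has $\Pi M\Phi w=\Phi w-\Pi P^\lambda\Phi w$, and because $\Pi T^\lambda$ is a contraction of modulus $\bar\gamma=\frac{(1-\lambda)\gamma}{1-\lambda\gamma}$ its linear part $\Pi P^\lambda$ has $\|\cdot\|_\mu$-norm at most $\bar\gamma$; a reverse triangle inequality then yields $\|\Pi M\Phi w\|_\mu\ge(1-\bar\gamma)\|\Phi w\|_\mu=\frac{1-\gamma}{1-\lambda\gamma}\|\Phi w\|_\mu$. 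Chaining these bounds proves the claim, and hence the first inequality.

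For the second part I would argue by perturbation. From $\|\epsilon_A\|_2\le\epsilon<C\le\frac{1}{\|A^{-1}\|_2}$ we get $\|\epsilon_A A^{-1}\|_2\le\|\epsilon_A\|_2\,\|A^{-1}\|_2\le\frac{\epsilon}{C}<1$, so the Neumann series shows that $I+\epsilon_A A^{-1}$ is invertible with $\|(I+\epsilon_A A^{-1})^{-1}\|_2\le\frac{1}{1-\|\epsilon_A A^{-1}\|_2}\le\frac{1}{1-\epsilon/C}$; since $\hat A=(I+\epsilon_A A^{-1})A$ is a product of invertible matrices it is itself invertible, which also justifies the convention stated in the footnote. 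The only mild care needed is that all the $\|\cdot\|_2$ above denote the spectral operator norm, so that submultiplicativity applies.
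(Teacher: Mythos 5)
Your proposal is correct, and its overall skeleton (algebraic identity for the parameter error, an operator-norm bound for $\Phi A^{-1}$ carrying the factor $\frac{1-\lambda\gamma}{(1-\gamma)\sqrt{\nu}}$, and a Neumann-series argument for the perturbation part) coincides with the paper's. The two places where you genuinely diverge are worth noting. First, you obtain the error identity in one line from $\hat A(\hat\theta-\theta)=\epsilon_b-\epsilon_A\theta$ and the factorization $\hat A=(I+\epsilon_A A^{-1})A$, whereas the paper reaches the same formula $\Phi A^{-1}(I+\epsilon_AA^{-1})^{-1}(\epsilon_A\theta-\epsilon_b)$ through a longer telescoping manipulation of $A\hat\theta-b$; your route is cleaner and equivalent. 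Second, for the key bound $\|\Phi A^{-1}x\|_\mu\le\frac{1-\lambda\gamma}{(1-\gamma)\sqrt{\nu}}\|x\|_2$, the paper commutes $\Phi$ past the inverse via the identity $X(I-YX)^{-1}=(I-XY)^{-1}X$ to get $\Phi A^{-1}=(I-\Pi M)^{-1}\Phi M_\mu^{-1}$ and then multiplies the upper bounds $\|(I-\Pi M)^{-1}\|_\mu\le\frac{1-\lambda\gamma}{1-\gamma}$ (Neumann series) and $\|\Phi M_\mu^{-1}x\|_\mu\le\frac{1}{\sqrt{\nu}}\|x\|_2$; you instead prove the equivalent lower bound $\|Aw\|_2\ge\frac{(1-\gamma)\sqrt{\nu}}{1-\lambda\gamma}\|\Phi w\|_\mu$ by duality against the Gram matrix and a reverse triangle inequality exploiting $\|\Pi P^\lambda\|_\mu\le\frac{(1-\lambda)\gamma}{1-\lambda\gamma}$. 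Both arguments rest on the same two ingredients (the smallest eigenvalue $\nu$ of $\Phi^TD_\mu\Phi$ and the contraction modulus of $\Pi P^\lambda$); the paper's factorization has the side benefit of being reused verbatim later to bound $\|A^{-1}\|_2$ and hence to choose the constant $C$, while your lower-bound formulation delivers the same constant with slightly less matrix-identity bookkeeping. Your treatment of the second part of the lemma is identical to the paper's.
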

\begin{proof}
Starting from the definitions of $v_{LSTD(\lambda)}$ and $\hat v_{LSTD(\lambda)}$, we have 
\begin{align}
\hat v_{LSTD(\lambda)}-v_{LSTD(\lambda)}&=\Phi \hat \theta - \Phi \theta \nonumber \\
& = \Phi A^{-1}(A\hat\theta-b)\label{eq:difference}.
\end{align}
On the one hand, with the expression of $A$ in Equation~\eqref{eq:defA}, writing $M=(1-\lambda)\gamma P(I-\lambda\gamma P)^{-1}$ and $M_\mu=\Phi^TD_\mu \Phi$, and using some linear algebra arguments, we can observe that
\begin{align*} 
\Phi A^{-1}&= \Phi \left[ \Phi^TD_\mu(I-\gamma P)(I-\lambda\gamma P)^{-1}\Phi\right]^{-1} \nonumber\\
&=\Phi   \left[ \Phi^TD_\mu (I-\lambda\gamma P-(1-\lambda)\gamma P)(I-\lambda\gamma P)^{-1}\Phi \right]^{-1}\nonumber\\
&=\Phi  (M_\mu-\Phi^TD_\mu M\Phi)^{-1}. \nonumber
\end{align*}
Since the matrices $A$ and $M_\mu$ are invertible, the matrix $(I-M^{-1}_\mu\Phi^TD_\mu M\Phi)$ is also invertible, then  
\begin{align*}
\Phi A^{-1}=\Phi (I-M^{-1}_\mu\Phi^TD_\mu M\Phi)^{-1}M^{-1}_\mu. 
\end{align*}
We know from \cite{vanroy:1997} that $\|\Pi\|_\mu=1$---the projection matrix $\Pi$ is defined in~Equation~\eqref{eq:defPi}---and $\| P\|_\mu=1$. Hence,  we have $\|\Pi M\|_\mu=\frac{(1-\lambda)\gamma}{1-\lambda\gamma} <1$ and the matrix $(I-\Pi M)$ is invertible. We can use the identity $X(I-YX)^{-1}=(I-XY)^{-1}X$ with $X=\Phi$ and $Y=M_\mu^{-1}\Phi^T D_\mu M$, and obtain
\begin{align}
\Phi A^{-1} = (I-\Pi M)^{-1} \Phi M^{-1}_\mu \label{eq:defAinv}.
\end{align}
On the other hand, using the facts that $A\theta=b$ and $\hat A \hat \theta= \hat b$, we can see that:
\begin{align}
A \hat \theta-b &= A \hat \theta - b - (\hat A \hat \theta - \hat b) \nonumber\\
& = \hat b - b - \epsilon_A \hat  \theta \nonumber\\
& = \hat b - b - \epsilon_A \theta +  \epsilon_A \theta - \epsilon_A \hat \theta \nonumber\\
& = \hat b - b - (\hat A - A) \theta + \epsilon_A ( \theta-\hat \theta) \nonumber\\
& = \hat b - \hat A \theta - (b-A\theta) + \epsilon_A A^{-1} ( A \theta - A \hat \theta) \nonumber\\
& =  \hat b - \hat A \theta + \epsilon_A A^{-1} ( b - A \hat \theta). \nonumber
\end{align}
Then we have 
\begin{align*}
A \hat \theta-b=\hat b - \hat A \theta -\epsilon_A A^{-1} (b -A \hat \theta).
\end{align*}
Consequently
\begin{align}
A \hat \theta-b&= (I+\epsilon_AA^{-1})^{-1}(\hat b-\hat A \theta)\nonumber\\
&= (I+\epsilon_AA^{-1})^{-1}(\epsilon_b-\epsilon_A \theta)\label{phia2}
\end{align}
 where the last equality follows from the identity $A\theta=b$. Using Equations~\eqref{eq:defAinv} and \eqref{phia2}, Equation~\eqref{eq:difference} can be rewritten as follows:
\begin{align}
\hat v_{LSTD(\lambda)}-v_{LSTD(\lambda)}&= (I-\Pi M)^{-1} \Phi M^{-1}_\mu (I+\epsilon_AA^{-1})^{-1}(\epsilon_b-\epsilon_A \theta). \label{eq:compwise}
\end{align}
Now we will try to bound $\|\Phi M^{-1}_\mu (I+\epsilon_AA^{-1})^{-1}(\epsilon_b-\epsilon_A \theta)\|_\mu$. Notice that for all $x$, 
\begin{align}
\|\Phi M_\mu^{-1} x\|_{\mu}& =\sqrt{x^T M_\mu^{-1} \Phi^T D_\mu \Phi M_\mu^{-1} x}\label{eq:matrix_prop} 
= \sqrt{x^T M_\mu^{-1} x} 
 \le \frac{1}{\sqrt{\nu}} \|x\|_2
\end{align}
where $\nu$ is the smallest (real) eigenvalue of the Gram matrix $M_\mu$. By taking the norm in Equation~\eqref{eq:compwise} and using the above relation, we get
\begin{align*}
\| \hat v_{LSTD(\lambda)}-v_{LSTD(\lambda)} \|_\mu & \le \|(I-\Pi M)^{-1}\|_\mu \|\Phi M^{-1}_\mu (I+\epsilon_AA^{-1})^{-1} (\epsilon_b-\epsilon_A \theta) \|_\mu \\
& \le \|(I-\Pi M)^{-1}\|_\mu \frac{1}{\sqrt{\nu}} \| (I+\epsilon_AA^{-1})^{-1} (\epsilon_A\theta -\epsilon_b)\|_2 \\
& \le \|(I-\Pi M)^{-1}\|_\mu \frac{1}{\sqrt{\nu}} \| (I+\epsilon_AA^{-1})^{-1} \|_2 \|\epsilon_A\theta -\epsilon_b\|_2.
\end{align*}
The first part of the lemma is obtained by using the fact that $\|\Pi M\|_\mu=\frac{(1-\lambda)\gamma}{1-\lambda\gamma}<1$, which imply that
\begin{align}
\|(I-\Pi M)^{-1}\|_\mu= \left\| \sum^{\infty}_{i=0}{(\Pi M)^i} \right\|_\mu \leq \sum^\infty_{i=0}{\|\Pi M\|^i_\mu}\leq \frac{1}{1-\frac{(1-\lambda)\gamma}{1-\lambda\gamma}} = \frac{1-\lambda\gamma}{1-\gamma}. \label{eq:ipm}
\end{align}

We are going now to prove the second part of the Lemma. Since $A$ is invertible, the matrix $\hat A$ is invertible if and only if the matrix $\hat AA^{-1}=(A+\epsilon_A)A^{-1}=I+\epsilon_AA^{-1}$ is invertible. Let us denote $\rho(\epsilon_AA^{-1})$ the spectral radius of the matrix $\epsilon_AA^{-1}$. A sufficient condition for $\hat AA^{-1}$ to be invertible is that  $\rho(\epsilon_AA^{-1})<1$. From the inequality $\rho(M)\leq \|M\|_2$ for any square matrix $M$, we can see that for any $C$ and $\epsilon$ that satisfy
$\|\epsilon_A\|_2\leq \epsilon<C<\frac{1}{\|A^{-1}\|_2}$, we have
\begin{align*}
\rho(\epsilon_AA^{-1})\leq \|\epsilon_AA^{-1}\|_2\leq \|\epsilon_A\|_2\|A^{-1}\|_2\leq \frac{\epsilon}{C}<1.
\end{align*}
It follows that the matrix $\hat A$ is invertible and 
\begin{align*}
\|(I+\epsilon_AA^{-1})^{-1}\|_2&= \left\| \sum_{i=0}^\infty (\epsilon_AA^{-1})^i \right\|_2
 \le  \sum_{i=0}^\infty \left(\frac{\epsilon}{C}\right)^i 
= \frac{1}{1-\frac{\epsilon}{C}}.
\end{align*}
This concludes the proof of Lemma~\ref{lemma:estimation_error}.  
\end{proof}
To finish the proof of Theorem~\ref{thm:main}, Lemma~\ref{lemma:estimation_error} suggests that we should control both terms $\|\epsilon_A\|_2$ and $\|\epsilon_A \theta-\epsilon_b\|_2$ with high probability. This is what we do now. 
\paragraph{Controlling $\|\epsilon_A\|_2$.}
By the triangle inequality, we can see that
\begin{align}
\label{mp1}
\|\epsilon_A\|_2 & \le \| \E[\epsilon_A] \|_2 + \| \epsilon_A-\E[\epsilon_A] \|_2.
\end{align}
Write $\hat A_{n,k}=\phi(X_k) (\phi(X_n)-\gamma\phi(X_{n+1}))^T$. For all $n$ and $k$, we have $\|\hat A_{n,k}\|_2 \le 2dL^2$.  We can bound the first term of the r.h.s. of Equation~\eqref{mp1} as follows, by replacing $A$ with its expression in~\eqref{eq:defA}:
\begin{align*}
\|\E[\epsilon_A]\|_2 &=  \left\| A - \E \left[ \frac{1}{n-1} \sum_{i=1}^{n-1} \sum^i_{k=1}(\lambda\gamma)^{i-k} \hat A_{i,k} \right] \right\|_2 \\
& = \left\| \E \left[ \frac{1}{n-1} \sum_{i=1}^{n-1} \left( \sum_{k=-\infty}^i (\lambda\gamma)^{i-k} \hat A_{i,k} - \sum^i_{k=1}(\lambda\gamma)^{i-k} \hat A_{i,k} \right) \right] \right\|_2 \\
& = \left\| \E \left[ \frac{1}{n-1} \sum_{i=1}^{n-1} (\lambda\gamma)^i \sum_{k=-\infty}^0 (\lambda\gamma)^{-k} \hat A_{i,k} \right] \right\|_2 \\
& \le \frac{1}{n-1} \sum_{i=1}^{n-1} (\lambda\gamma)^i \frac{2dL^2}{1-\lambda\gamma} \\
& \le \frac{1}{n-1}\frac{2dL^2}{(1-\lambda\gamma)^2} = \epsilon_0(n).
\end{align*}
Let $(\delta_n)$ a parameter in $(0,1)$ depending on $n$, that we will fix later, a consequence of Equation~\eqref{mp1} and the just derived bound is that:
\begin{align*}
\P\left\{\|\epsilon_A\|_2 \ge \epsilon_1(n,\delta_n)\right\} & \le \P \{ \|\epsilon_A - \E[\epsilon_A] \|_2 \ge \epsilon_1(n,\delta_n)-\epsilon_0(n) \}\\
& \le \delta_n
\end{align*}
if we choose $\epsilon_1(n,\delta_n)$ such that (cf. Lemma~\ref{lemma:concentration_matrix}) 
\[\epsilon_1(n,\delta_n)-\epsilon_0(n)  = \frac{4dL^2}{(1-\lambda\gamma)\sqrt{n-1}}\sqrt{\left(\left\lceil\frac{\log(n-1)}{\log\left(\frac{1}{\lambda\gamma}\right)}\right\rceil+1\right) J(n-1,\delta_n)}+\epsilon(n)\] where $\epsilon(n)=\frac{4mdL^2}{(n-1)(1-\lambda\gamma)}$, that is if 
\begin{align}
\label{eq:defeps1}
\epsilon_1(n,\delta_n) &=  \frac{4dL^2}{(1-\lambda\gamma)\sqrt{n-1}} \sqrt{\left(\left\lceil\frac{\log(n-1)}{\log\left(\frac{1}{\lambda\gamma}\right)}\right\rceil+1\right) J(n-1,\delta_n)}+\epsilon(n)+\epsilon_0(n).
\end{align}


\paragraph{Controlling $\|\epsilon_A\theta -\epsilon_b\|_2$.}
By using the fact that $A \theta = b$, the definitions of $\hat A$ and $\hat b$, and the fact that $\phi(x)^T \theta=[\phi \theta](x)$, we have
\begin{align*}
\epsilon_A\theta -\epsilon_b &= \hat A \theta - \hat b \\
& = \frac{1}{n-1}\sum^{n-1}_{i=1}z_i(\phi(X_i)-\gamma \phi(X_{i+1})^T)\theta -  \frac{1}{n-1}\sum^{n-1}_{i=1}z_i r(X_i) \\
& = \frac{1}{n-1}\sum^{n-1}_{i=1}z_i([\phi \theta](X_i)-\gamma [\phi \theta](X_{i+1})^T  - r(X_i)) \\
& = \frac{1}{n-1}\sum^{n-1}_{i=1}z_i\Delta_i
\end{align*}
where, since $v_{LSTD(\lambda)}=\Phi \theta$, $\Delta_i$ is the following number:
\[\Delta_i=v_{LSTD(\lambda)}(X_i)-\gamma v_{LSTD(\lambda)}(X_{i+1})-r(X_i).\]
We can control $\|\epsilon_A\theta -\epsilon_b\|_2$ by following the same proof steps as above. In fact we have
\begin{align}
\|\epsilon_A\theta -\epsilon_b\|_2&\leq \|\epsilon_A\theta -\epsilon_b-\E[\epsilon_A\theta -\epsilon_b]\|_2+\|\E[\epsilon_A\theta -\epsilon_b]\|_2,\label{eq:mp2}\\
\text{and}~~\|\E[\epsilon_A\theta -\epsilon_b]\|_2&\leq \|\E[\epsilon_A]\|_2\|\theta\|_2+\|\E[\epsilon_b]\|_2.\nonumber
\end{align}
From what have been developed before we can see that $\|\E[\epsilon_A]\|_2\leq \epsilon_0(n)=\frac{1}{n-1}\frac{2dL^2}{(1-\lambda\gamma)^2}$.
Similarly we can show that $\|\E[\epsilon_b]\|_2\leq \frac{1}{n-1}\frac{\sqrt{d}LR_{\text{max}}}{(1-\lambda\gamma)^2}$. We can hence conclude that
\begin{align*}
\|\E[\epsilon_A\theta -\epsilon_b]\|_2\leq \frac{1}{n-1}\frac{2dL^2}{(1-\lambda\gamma)^2}\|\theta\|_2+\frac{1}{n-1}\frac{\sqrt{d}LR_{\text{max}}}{(1-\lambda\gamma)^2}=\epsilon'_0(n).
\end{align*}
As a consequence of Equation~\eqref{eq:mp2} and the just derived bound we have 
\begin{align*}
\P(\|\epsilon_A\theta-\epsilon_b\|_2\geq \epsilon_2(\delta_n))\leq\P(\|\epsilon_A\theta-\epsilon_b-\E[\epsilon_A\theta-\epsilon_b]\|_2\geq \epsilon_2(\delta_n)-\epsilon'_0(n))\leq\delta_n
\end{align*}
if we choose $\epsilon_2(\delta_n)$ such that (cf Lemma~\ref{lemma:concentration_matrix})
\begin{align}
\epsilon_2(\delta_n)=\frac{2\sqrt dL\|\Delta_i\|_\infty}{(1-\lambda\gamma)\sqrt{n-1}} \sqrt{\left(\left\lceil\frac{\log(n-1)}{\log\left(\frac{1}{\lambda\gamma}\right)}\right\rceil+1\right) J(n-1,\delta_n)}+\frac{2\sqrt{d}L  \|\Delta_i\|_\infty}{(n-1)(1-\lambda\gamma)}\left\lceil\frac{\log(n-1)}{\log\left(\frac{1}{\lambda\gamma}\right)}\right\rceil+\epsilon'_0(n).\label{eq:defeps2}
\end{align}

It remains to compute a bound on $\|\Delta_i\|_\infty$. To do so,  it suffices
to bound $v_{LSTD(\lambda)}$. For all $x\in \X$, we have
\begin{align*}
|v_{LSTD(\lambda)}(x)|=|\phi^T(x)\theta|\leq \|\phi^T(x)\|_2\|\theta\|_2\leq\sqrt{d}L\|\theta\|_2,
\end{align*}
where the first inequality is obtained from the Cauchy-Schwarz inequality. We thus need to bound $\|\theta\|_2$. On the one hand,  we have
\begin{align*}
\|v_{LSTD(\lambda)}\|_\mu=\|\Phi \theta\|_\mu = \sqrt{\theta^T M_\mu \theta} \geq \sqrt \nu \|\theta\|_2,
\end{align*}
and on the other hand, we have
\begin{align*}
\|v_{LSTD(\lambda)}\|_\mu=\|(I-\Pi M)^{-1}\Pi (I-\lambda\gamma P)^{-1}r\|_\mu\leq \frac{R_{\text{max}}}{1-\gamma}=V_{\text{max}}.
\end{align*}
Therefore 
\begin{align*}
\|\theta \|_2\leq \frac{V_{\text{max}}}{\sqrt \nu}.
\end{align*}
We can conclude that 
\begin{align*}
\forall x\in \X,~|v_{LSTD(\lambda)}(x)|\leq\frac{\sqrt{d}LV_{\text{max}}}{\sqrt \nu}.
\end{align*}
 Then for all $i$ we have 
\begin{align*}
|\Delta_i|&=|v_{LSTD(\lambda)}(X_i)-\gamma v_{LSTD(\lambda)}(X_{i+1})-r(X_i)|\\
&\leq \frac{\sqrt{d}LV_{\text{max}}}{\sqrt \nu}+\gamma\frac{\sqrt{d}LV_{\text{max}}}{\sqrt \nu}+ (1-\gamma)V_{\text{max}}.
\end{align*}
Since $\Phi^TD_\mu\Phi$ is a symmetric matrix, we have $\nu\leq\|\Phi^TD_\mu\Phi\|_2$. We can see that
\begin{align*}
\|\Phi^TD_\mu\Phi\|_2\leq d\max_{j,k}|\phi^t_kD_\mu\phi_j|=d\max_{j,k}|\phi^t_kD^{\frac{1}{2}}_\mu D^{\frac{1}{2}}_\mu\phi_j|
\leq  d\max_{j,k}\|\phi^t_k\|_\mu\|\phi_j\|_\mu\leq dL^2,
\end{align*}
so that $\nu\leq dL^2$. It follows that, for all $i$  
\begin{align*}
|\Delta_i|\leq\frac{\sqrt{d}LV_{\text{max}}}{\sqrt \nu}+\gamma\frac{\sqrt{d}LV_{\text{max}}}{\sqrt \nu}+ \frac{\sqrt dL}{\sqrt{\nu}}(1-\gamma)V_{\text{max}}=2\frac{\sqrt dL}{\sqrt{\nu}}V_{\text{max}}.
\end{align*}


\paragraph{Conclusion of the proof.}
We are ready to conclude the proof.
Now that we know how to control both terms $\|\epsilon_A\|_2$ and $\|\epsilon_A\theta-\epsilon_b\|_2$, we can see that
\begin{align*}
& \P\left\{ \exists n \ge 1, ~\{ \|\epsilon_A\|_2 \ge \epsilon_1(n,\delta_n) \} \cup \{ \|\epsilon_A\theta-\epsilon_b)\|_2 \ge \epsilon_2(n,\delta_n) \}\right\} \\
\le &\sum_{n=1}^\infty \P\left\{  \|\epsilon_A\|_2 \ge \epsilon_1(n,\delta_n) \right\} + P\left\{ \|\epsilon_A\theta-\epsilon_b)\|_2 \ge \epsilon_2(n,\delta_n) \right\}\\
\le &2 \sum_{n=1}^{\infty} \delta_n =  \frac{1}{2} \frac{\pi^2}{6} \delta < \delta
\end{align*}
if we choose $\delta_n=\frac{1}{4 n^2}\delta$.
By the second part of Lemma~\ref{lemma:estimation_error}, for all $\delta$, with probability at least $1-\delta$, for all $n$ such that $\epsilon_1(n,\delta_n)<C$, $\hat A$ is invertible and
\begin{align*}
\|v_{LSTD(\lambda)}-\hat v_{LSTD(\lambda)}\|_\mu &\le \frac{1-\lambda\gamma}{(1-\gamma)\sqrt{\nu}} \frac{\epsilon_2\left(n,\delta_n\right)}{1-\frac{\epsilon_1\left(n,\delta_n\right)}{C}} \\
 & = \frac{1-\lambda\gamma}{(1-\gamma)\sqrt{\nu}} \left[ \epsilon_2\left(n,\delta_n\right) + \frac{\epsilon_1\left(n,\delta_n\right)\epsilon_2\left(n,\delta_n\right)}{C-{\epsilon_1\left(n,\delta_n\right)}} \right].
\end{align*}
We get the bound of the Theorem by replacing $\epsilon_1(n,\delta_n)$ and $\epsilon_2(n,\delta_n)$ with their definitions in Equations~\eqref{eq:defeps1} and~\eqref{eq:defeps2}.

 To complete the proof of Theorem~\ref{thm:main}, we now need to show how to pick $C$, which will allow to show that the condition $\epsilon_1(n,\delta_n)<C\leq\frac{1}{\|A^{-1}\|_2}$ is equivalent to the one that characterizes the index $n_0(\delta)$ in the Theorem. Indeed we have
\begin{align*}
\forall v\in \R^d, \|\Phi A^{-1}v\|_\mu = \sqrt{(A^{-1}v)^T M_\mu A^{-1}v} \geq \sqrt \nu \|A^{-1}v\|_2.
\end{align*}
We know that
\begin{align*}
\|\Phi A^{-1}v\|_\mu=\|(I-\Pi M)^{-1}\Phi M^{-1}_\mu v\|_\mu\leq \frac{1-\lambda \gamma}{1-\gamma}\|\Phi M^{-1}_\mu v\|_\mu\leq \frac{1-\lambda \gamma}{(1-\gamma)\sqrt \nu}\|v\|_2
\end{align*}
where the last inequality is obtained from Equation~\eqref{eq:matrix_prop}.
Then \[\|A^{-1}\|_2\leq \frac{1-\lambda \gamma}{(1-\gamma)\nu},\]
and consequently we can take $C=\frac{(1-\gamma)\nu}{1-\lambda\gamma}$. This concludes the proof of Theorem~\ref{thm:main}.



\section{Conclusion and Future Work}\label{section:4}

This paper introduces a high-probability convergence rate for the algorithm LSTD($\lambda$) in terms of the number of samples $n$ and the parameter $\lambda$. We have shown that this convergence is at the rate of $\tilde O(\frac{1}{\sqrt{n}})$, in the case where the samples are generated from a stationary $\beta$-mixing process.
To do so, we  introduced an original vector concentration inequality (Lemma~\ref{lemma:concentration_matrix}) for estimates that are based on eligibility traces. 
A simplified version of this concentration inequality (Lemma~\ref{lemma:1}), that applies to general stationary beta-mixing processes,  may be useful in many other contexts where we want to relax the i.i.d. hypothesis on the samples. 

The performance bound that we deduced is more accurate than the one from \cite{lazaric:2012}, restricted to the case $\lambda=0$.
The analysis that they proposed was based on a Markov design regression model. 
By using the trace truncation technique we have employed, we believe it is possible to extend the proof of \cite{lazaric:2012} to the general case $\lambda$ in $(0,1)$.
However we would still pay a $4\sqrt{2}$ extra factor in the final bound.

In the future, we plan to instantiate our new bound in a Policy Iteration context like \citet{lazaric:2012} did for LSTD(0). An interesting follow-up work would also be to extend our analysis of LSTD($\lambda$) to the situation where one considers non-stationary policies, as \citet{scherrer:2012} showed that it allows to improve the overall performance of the Policy Iteration Scheme. Finally, a challenging question would be to consider LSTD($\lambda$) in the off-policy case, for which the convergence has recently been proved by~\citet{yu:2010}.


\appendix


\section{Proof of Lemma~\ref{lemma:concentration_matrix}}
\label{app:proofbetatrace}

Writing for a given integer $m>1$
\begin{align*}
\epsilon_1 & =\frac{1}{n-1} \sum_{i=1}^{m-1}{G_i-\E[G_i]} \\
\mbox{and~~~}\epsilon_2 & = \frac{1}{n-1} \sum_{i=m}^{n-1}(z_i-z_i^m)\tau(X_i,X_{i+1})^T-\E[(z_i-z_i^m)\tau(X_i,X_{i+1})^T],
\end{align*}
we have
\begin{align}
\frac{1}{n-1} \sum_{i=1}^{n-1}{G_i-\E[G_i]} & = \frac{1}{n-1} \sum_{i=m}^{n-1}{G_i-\E[G_i]} + \epsilon_1 \nonumber \\
& = \frac{1}{n-1} \sum_{i=m}^{n-1}z_i\tau(X_i,X_{i+1})^T-\E[z_i\tau(X_i,X_{i+1})^T] + \epsilon_1 \nonumber \\
& = \frac{1}{n-1} \sum_{i=m}^{n-1}z_i^m \tau(X_i,X_{i+1})^T-\E[z_i^m \tau(X_i,X_{i+1})^T] + \epsilon_1+\epsilon_2\nonumber\\
&=\frac{1}{n-1}\sum^{n-1}_{i=m}(G^m_i-\E[G^m_i])+ \epsilon_1+\epsilon_2. \label{eq:bt1}
\end{align}
For all $i$, we have $\| z_i \|_\infty \le \frac{L}{1-\lambda\gamma}$, $\|G_i\|_\infty \le \frac{LL'}{1-\lambda\gamma}$, and $\| z_i-z_i^m \|_\infty \le \frac{(\lambda\gamma)^{m} L}{1-\lambda\gamma}$. As a consequence---using $\|M\|_2 \le \|M\|_F=\sqrt{d\times k}\|x\|_\infty$ for $M \in \R^{d\times k}$ with $x$ the vector obtained by concatenating all $M$ columns---, we can see that
\begin{align}
\|\epsilon_1 + \epsilon_2\|_2 &\le \frac{2(m-1)\sqrt{d\times k}LL'}{(n-1)(1-\lambda\gamma)} + \frac{2(\lambda\gamma)^{m} \sqrt{d\times k}LL'}{(1-\lambda\gamma)}\label{eq:bt2} 
\end{align}
By concatenating all its columns, the $d \times k$ matrix $G^m_i$ may be seen a single vector $U^m_i$ of size $dk$.
Then, for all $\epsilon>0$,
\begin{align}
\nonumber \P \left(\left\|\frac{1}{n-m}\sum^{n-1}_{i=m}(G^m_i-\E[G^m_i])\right\|_2\geq \epsilon\right) &\leq \P\left(\left\|\frac{1}{n-m}\sum^{n-1}_{i=m}(G^m_i-\E[G^m_i])\right\|_F\geq \epsilon\right) \\
& = \P\left(\left\|\frac{1}{n-m}\sum^{n-1}_{i=m}(U^m_i-\E[U^m_i])\right\|_2 \geq \epsilon\right).\label{eq:Frob_norm}
\end{align}
The variables $U^m_i$ define a stationary $\beta$-mixing process (Lemma~\ref{lemma:beta_relation}). To deal with the $\beta$-mixing assumption, we use the decomposition technique proposed by \citet{Yu:1994} that consists in dividing the stationary sequence $U^m_m,\dots,U^m_{n-1}$ into $2\mu_{n-m}$ blocks of length $a_{n-m}$ (we assume here that $n-m=2a_{n-m}\mu_{n-m}$).
 The blocks are of two kinds: those which contains the even indexes $E=\cup^{\mu_{n-m}}_{l=1}E_l$ and those with odd indexes $H=\cup^{\mu_{n-m}}_{l=1}H_l$. Thus, by grouping the variables into blocks we get
\begin{align}
\P\left(\left\|\frac{1}{n-m}\sum^{n-1}_{i=m}U^m_i-\E[U^m_i]\right\|_2 \ge \epsilon\right) \le & \P \left( \left\|  \sum_{i \in H} U^m_i-\E[U^m_i]\right\|_2 + \left\|  \sum_{i \in E} U^m_i-\E[U^m_i] \right\|_2 \ge (n-m)\frac{\epsilon}{2} \right) \label{eq:7} \\
\nonumber\le & \P \left( \left\|  \sum_{i \in H} U^m_i-\E[U^m_i] \right\|_2 \ge \frac{(n-m)\epsilon}{4} \right) + \\
& ~~~~~~\P \left( \left\|  \sum_{i \in E} U^m_i -\E[U^m_i]\right\|_2 \ge  \frac {(n-m)\epsilon}{4} \right)\label{eq:8} \\
= &2 \P \left( \left\| \sum_{i \in H} U^m_i-\E[U^m_i] \right\|_2 \ge \frac{(n-m)\epsilon}{4} \right)\label{eq:9}
\end{align}
where Equation~\eqref{eq:7} follows from the triangle inequality, Equation~\eqref{eq:8} from the fact that the event $\{X+Y\geq a\}$ implies $\{X\geq \frac{a}{2}\}$ or $\{Y\geq \frac{a}{2}\}$, and Equation~\eqref{eq:9} from the assumption that the process is stationary. Since $H=\cup^{\mu_{n-m}}_{l=1}H_l$ we have
\begin{align}
\P\left(\left\|\frac{1}{n-m}\sum^{n-1}_{i=m}U^m_i-\E[U^m_i]\right\|_2 \ge \epsilon\right) & \leq 2 \P \left( \left\| \sum_{l=1}^{\mu_{n-m}} \sum_{i \in H_l} U^m_i-\E[U^m_i] \right\|_2 \ge \frac{(n-m)\epsilon}{4} \right) \nonumber\\
& = 2 \P \left( \left\| \sum_{l=1}^{\mu_{n-m}} U(H_l)-\E[U(H_l)] \right\|_2 \ge \frac{(n-m)\epsilon}{4} \right)\label{eq:beforedecomp}
\end{align}
where we defined $U(H_l)=\sum_{i \in H_l} U^m_i$. Now consider the sequence of identically distributed independent blocks $(U'(H_l))_{l=1,\dots,\mu_{n-m}}$ such that each block $U'(H_l)$ has the same distribution as $U(H_l)$. We are going to use the following technical result.
\begin{lemma}\cite{Yu:1994}\label{lemma:yu_decomp}
Let $X_1,\dots,X_n$ be a sequence of samples drawn from a stationary
$\beta$-mixing process with coefficients $\{\beta_i\}$.  Let
$X(H)=(X(H_1),\dots,X(H_{\mu_{n-m}}))$ where for all $j$
$X(H_j)=(X_{i})_{i \in H_j}$. Let
$X'(H)=(X'(H_1),\dots,X'(H_{\mu_{n-m}}))$ with $X'(H_j)$
independent and such that for all $j$, $X'(H_j)$ has same distribution
as $X(H_j)$. Let $Q$ and $Q'$ be the distribution of $X(H)$ and
$X'(H)$ respectively.  For any measurable function
$h:\X^{a_n\mu_n}\rightarrow \R$ bounded by $B$, we have
\[|\E_Q[h(X(H)]-\E_{Q'}[h(X'(H)]|\leq B\mu_n\beta_{a_n}.\]
\end{lemma}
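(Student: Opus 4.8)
The plan is to reduce the statement to the total-variation characterization of $\beta$-mixing and then to decouple the $\mu_n$ retained blocks one at a time through a telescoping (hybrid) argument; equivalently one could build an explicit coupling in the manner of Berbee, but the telescoping version is more transparent here.

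First, I would recall the standard reformulation of the coefficient introduced in Assumption~\ref{assm:beta}: for a stationary $\beta$-mixing process, $\beta_{a_n}$ bounds, in total variation, the dependence between a past $\sigma$-algebra and a future event separated from it by $a_n$ time steps. In the present decomposition the retained blocks $X(H_1),\dots,X(H_{\mu_n})$ are pairwise separated by the discarded blocks $X(E_1),\dots,X(E_{\mu_n})$, each of length $a_n$; consequently the future block $X(H_{l+1})$ is separated from $\sigma(X(H_1),\dots,X(H_l))$ by a gap of exactly $a_n$, so the relevant mixing coefficient is precisely $\beta_{a_n}$ and not some larger quantity.

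Second, I would interpolate between the true joint law and the product law by a family of hybrid measures $Q=Q^{(0)},Q^{(1)},\dots,Q^{(\mu_n)}=Q'$, where under $Q^{(j)}$ the first $j$ blocks are made mutually independent---each keeping its true stationary marginal---while the remaining blocks retain their genuine conditional dependence. Stationarity is exactly what guarantees that the marginal of each block agrees with that of its independent copy $X'(H_j)$, so that $Q^{(\mu_n)}$ really is the product law $Q'$. Telescoping then gives $\E_Q[h]-\E_{Q'}[h]=\sum_{j=1}^{\mu_n}\bigl(\E_{Q^{(j-1)}}[h]-\E_{Q^{(j)}}[h]\bigr)$, so it suffices to control each one-block decoupling step. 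The only difference between $Q^{(j-1)}$ and $Q^{(j)}$ is whether the $j$-th block is coupled to the preceding ones, and that coupling is mediated solely by the gap of length $a_n$; hence the total-variation distance between these two laws is at most $\beta_{a_n}$. Since $|h|\le B$, each term is bounded by $B\beta_{a_n}$, and summing the $\mu_n$ terms yields $|\E_Q[h]-\E_{Q'}[h]|\le B\mu_n\beta_{a_n}$, as claimed.

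The main obstacle will be making the hybrid construction and the one-block-at-a-time decoupling fully rigorous: one must define the intermediate measures precisely as conditional laws, verify at every step that the dependence being removed is carried only by the gap of size $a_n$ (so that $\beta_{a_n}$, rather than a coefficient that entangles several blocks, is the correct bound), and relate the integral of the bounded $h$ against the signed difference $Q^{(j-1)}-Q^{(j)}$ to the total-variation distance with the right normalization. The careful bookkeeping of which $\sigma$-algebras are being separated, together with the essential use of stationarity to identify the block marginals, is where the subtlety lies; the remaining estimates are routine.
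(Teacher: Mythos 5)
The paper does not prove this statement at all: it is quoted verbatim from \cite{Yu:1994} (Lemma 4.1 there) and used as a black box, so there is no internal proof to compare against. Your sketch is essentially the standard argument by which Yu establishes it --- telescoping from the true joint law $Q$ to the product law $Q'$ through hybrid measures, bounding each one-block decoupling step by the total-variation characterization of the $\beta$-coefficient across the gap of length $a_n$ left by the discarded block $E_l$, and invoking stationarity to identify the block marginals --- and the outline is correct. Two small points of care if you write it out: (i) the hybrid measures are cleaner to define by peeling blocks off one end, i.e.\ $Q^{(j)}=(\text{true joint law of }X(H_1),\dots,X(H_{\mu_n-j}))\otimes(\text{product of marginals of the last }j\text{ blocks})$, since ``remaining blocks retain their genuine conditional dependence on modified earlier blocks'' is not well defined as stated; this also shows only $\mu_n-1$ nontrivial steps are needed, consistent with the bound. (ii) The step from the paper's definition of $\beta_i$ (an expected supremum over future events) to the total-variation distance between the joint and product laws on the product $\sigma$-algebra is exactly where the constant in front of $B$ is decided, and is the one identity you must quote or prove rather than treat as immediate.
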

By applying Lemma~\ref{lemma:yu_decomp}, Equation~\eqref{eq:beforedecomp} leads to:
\begin{align}
\P\left(\left\|\frac{1}{n-m}\sum^{n-1}_{i=m}U^m_i-\E[U^m_i]\right\|_2 \ge \epsilon\right)\le 2\P \left( \left\| \sum_{l=1}^{\mu_{n-m}}  U'(H_l)-\E[U'(H_l)] \right\|_2 \ge \frac{(n-m)\epsilon}{4} \right) + 2\mu_{n-m} \beta_{a_{n-m}}. \label{eq:10}
\end{align}
The variables $U'(H_l)$ are independent. Furthermore, it can be seen that $(\sum^{\mu_{n-m}}_{l=1}U'(H_l)-\E[U'(H_l)])_{\mu_{n-m}}$ is a $\sigma(U'(H_1),\dots,U'(H_{\mu_{n-m}}))$ martingale:
\begin{align*}
& \E\left[\sum^{\mu_{n-m}}_{l=1}U'(H_l)-\E[U'(H_l)]\left|\vphantom{\sum^{\mu_{n-m}}_{l=1}U'(H_l)-\E[U'(H_l)]}\right.U'(H_1),\dots,U'(H_{\mu_{n-m}-1})\right]\\
= & \sum^{\mu_{n-m}-1}_{l=1}U'(H_l)-\E[U'(H_l)]+\E[U'_{H_{\mu_{n-m}}}-\E[U'_{H_{\mu_{n-m}}}]]\\
= & \sum^{\mu_{n-m}-1}_{l=1}U'(H_l)-\E[U'(H_l)].
\end{align*}
We can now use the following concentration result for martingales.
\begin{lemma}[\cite{hayes}]
\label{lemma:hayes}
Let $X=(X_0,\dots,X_n)$ be a discrete time martingale taking values in an Euclidean space such that $X_0=0$ and for all $i$, $\|X_i-X_{i-1}\|_2 \le B_2$ almost surely. Then for all  $\epsilon$, 
$$
P \left\{ \|X_n\|_2 \ge \epsilon \right\} < 2e^2 e^{-\frac{\epsilon^2}{2n(B_2)^2}}. 
$$
\end{lemma}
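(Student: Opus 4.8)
The plan is to prove this dimension-free vector Azuma--Hoeffding inequality by a Chernoff argument driven by a radial Lyapunov function. First I would reduce to the normalized case $B_2=1$ by rescaling $X_i \mapsto X_i/B_2$ and $\epsilon \mapsto \epsilon/B_2$, which leaves the claimed inequality invariant. Then, for a free parameter $\lambda>0$, Markov's inequality applied to the nonnegative increasing functional $e^{\lambda\|\cdot\|_2}$ gives
$$P\{\|X_n\|_2 \ge \epsilon\} \le e^{-\lambda\epsilon}\,\E\!\left[e^{\lambda\|X_n\|_2}\right],$$
so the whole problem reduces to controlling the moment generating functional $\E[e^{\lambda\|X_n\|_2}]$; the Gaussian exponent $\frac{\epsilon^2}{2n}$ will be recovered at the very end by choosing $\lambda=\epsilon/n$.

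The core is a one-step supermartingale estimate. Writing $d_i = X_i - X_{i-1}$, so that $\E[d_i \mid \mathcal{F}_{i-1}]=0$ and $\|d_i\|_2 \le 1$, I would introduce a radial surrogate $f(x)=F(\|x\|_2)$ that behaves like $e^{\lambda\|x\|_2}$ for large $\|x\|_2$ while remaining smooth and bounded near the origin, and establish
$$\E\!\left[f(X_i) \mid \mathcal{F}_{i-1}\right] \le e^{\lambda^2/2}\, f(X_{i-1}).$$
The mechanism is to linearize the norm along the current direction $\hat X_{i-1}=X_{i-1}/\|X_{i-1}\|_2$: the scalar projection $\langle \hat X_{i-1}, d_i\rangle$ is conditionally mean-zero and bounded in absolute value by $\|d_i\|_2 \le 1$, so Hoeffding's lemma gives $\E[e^{\lambda\langle \hat X_{i-1}, d_i\rangle}\mid \mathcal{F}_{i-1}] \le e^{\lambda^2/2}$. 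The process $f(X_i)\,e^{-i\lambda^2/2}$ is then a supermartingale, and telescoping from $i=n$ down to $i=0$ yields $\E[f(X_n)] \le e^{n\lambda^2/2}\,f(0)$; comparing $f$ with $e^{\lambda\|\cdot\|_2}$ feeds this back into the Chernoff bound of the first paragraph.

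The hard part, and the reason naive approaches fail, lies in making the one-step estimate valid near the origin. The exact identity
$$\|X_{i-1}+d_i\|_2 = \sqrt{\big(\|X_{i-1}\|_2 + \langle \hat X_{i-1}, d_i\rangle\big)^2 + \big(\|d_i\|_2^2 - \langle \hat X_{i-1}, d_i\rangle^2\big)}$$
shows that the nonnegative perpendicular component $\|d_i\|_2^2 - \langle \hat X_{i-1}, d_i\rangle^2$ contributes a curvature term of order $\|d_i\|_2^2/\|X_{i-1}\|_2$ that blows up as $\|X_{i-1}\|_2 \to 0$, so the linearization cannot be applied uniformly. A single fixed direction, or an average of $e^{\lambda\langle u, X_n\rangle}$ over the unit sphere, would avoid the origin but reintroduce a dimension-dependent loss, since such projections concentrate near $0$ in high dimension---exactly the pitfall this lemma is designed to escape. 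The resolution is to shape $F$ so that the curvature contribution is absorbed into the $e^{\lambda^2/2}$ slack everywhere outside a fixed bounded neighborhood of the origin, and to treat that neighborhood by the crude observation that a martingale with unit increments cannot be far from $0$ in few steps. This near-origin correction, together with the gap between $f(0)$ and the value one would ideally want, is precisely what turns the scalar Azuma constant $2$ into the dimension-free constant $2e^2$. Optimizing $\lambda=\epsilon/n$ in the normalized problem then produces the stated bound $2e^2 e^{-\epsilon^2/(2nB_2^2)}$.
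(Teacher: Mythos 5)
A preliminary remark: the paper contains no proof of this lemma---it is imported verbatim from the cited reference \cite{hayes} and used as a black box---so there is no ``paper proof'' to compare yours against; what follows therefore assesses your proposal against what is actually required to establish the stated bound.

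Your strategy (Chernoff bound on $\|X_n\|_2$, a radial potential $f(x)=F(\|x\|_2)$ satisfying a one-step supermartingale inequality with factor $e^{\lambda^2/2}$, and special treatment of a neighborhood of the origin) is a faithful description of the \emph{shape} of Hayes' argument, and your diagnosis of the obstruction---the curvature term $(\|d_i\|_2^2-\langle\hat X_{i-1},d_i\rangle^2)/\|X_{i-1}\|_2$ from the perpendicular component, and the dimension loss incurred by projecting onto a fixed direction---is exactly right. The gap is that the proposal stops where the proof begins. The inequality $\E[f(X_i)\mid\mathcal{F}_{i-1}]\le e^{\lambda^2/2}f(X_{i-1})$ is the entire content of the lemma, and it is \emph{false} for the naive choice $f(x)=e^{\lambda\|x\|_2}$: taking $X_{i-1}=0$ and $d_i=\pm u$ with equal probability for a unit vector $u$ gives $\E[f(X_i)\mid\mathcal{F}_{i-1}]=e^{\lambda}$, which exceeds $e^{\lambda^2/2}f(X_{i-1})=e^{\lambda^2/2}$ for all $\lambda<2$. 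Everything therefore hinges on exhibiting a concrete $F$ and verifying the one-step inequality uniformly over $\|X_{i-1}\|_2$ and over all admissible conditional laws of $d_i$---including the intermediate regime where $\|X_{i-1}\|_2$ is of order $1$ and neither the linearization nor a crude bound applies. You describe the properties $F$ should have but never construct it, and the claim that the near-origin correction ``is precisely what turns the constant $2$ into $2e^2$'' is an assertion about bookkeeping you have not carried out. Relatedly, the argument you sketch does not end with a clean optimization of $\lambda$: Hayes' theorem delivers a bound with a shifted exponent, of the form $2e\,e^{-(\epsilon/B_2-1)^2/(2n)}$, and the constant $2e^2$ with the unshifted exponent $\epsilon^2/(2nB_2^2)$ is then obtained from $(a-1)^2\ge a^2-2a$ together with the reduction to $\epsilon\le nB_2$ (for larger $\epsilon$ the probability is zero). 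Your final sentence, which claims that setting $\lambda=\epsilon/n$ ``produces the stated bound,'' skips this step. As a reading guide to the cited proof your text is accurate; as a proof it is incomplete at its only nontrivial step.
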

Indeed, taking $X_{\mu_{n-m}}=\sum_{l=}^{\mu_{n-m}} U'(H_l)-\E[U'(H_l)]$, and observing that $\|X_i-X_{i-1}\|=\|U'(H_l)-\E[U'(H_l)]\|_2\leq a_{n-m} C$ with $C=\frac{2 \sqrt{dk} LL'}{1-\lambda\gamma}$, the lemma leads to
\begin{align*}
\P \left( \left\|  \sum^{\mu_{n-m}}_{l=1}U'(H_l)-\E[U'(H_l)]  \right\|_2 \ge \frac{(n-m)\epsilon}{4} \right)& \le 2e^2 e^{-\frac{(n-m)^2\epsilon^2}{32 \mu_{n-m} (a_{n-m}C)^2}} \\
& = 2e^2 e^{-\frac{(n-m)\epsilon^2}{16 a_{n-m} C^2}}.
\end{align*}
where the second line is obtained by using the fact that $2a_{n-m}\mu_{n-m}=n-m$.
With Equations~\eqref{eq:beforedecomp} and \eqref{eq:10}, we finally obtain
\begin{align*}
\P\left(\left\|\frac{1}{n-m}\sum^{n-1}_{i=m}U^m_i-\E[U^m_i]\right\|_2 \ge \epsilon\right)& \le 4 e^2 e^{-\frac{(n-m)\epsilon^2}{16 a_{n-m} C^2}} + 2(n-m) \beta^U_{a_{n-m}}.
\end{align*}

The vector $U^m_i$ is a function of $Z_i=(X_{i-m+1},\dots,X_{i+1})$, and Lemma~\ref{lemma:beta_relation} tells us that for all $j>m$,
\begin{align*}
\beta_j^U \le \beta^Z_j \leq \beta^X_{j-m}\le \overline\beta e^{-b (j-m)^\kappa}. 
\end{align*} 
So the equation above may be re-written as
\begin{align}
\P\left(\left\|\frac{1}{n-m}\sum^{n-1}_{i=m}U^m_i-\E[U^m_i]\right\|_2 \ge \epsilon\right)  \le   4 e^2 e^{-\frac{(n-m)\epsilon^2}{16 a_{n-m} C^2}} + 2(n-m)\overline\beta e^{-b (a_{n-m}-m)^\kappa}=\delta'.\label{eq:mixingvector}
\end{align}
We now follow a reasoning similar to that of \cite{lazaric:2012} in order to get the same exponent in both of the above exponentials. Taking
$a_{n-m}-m=\left\lceil \frac{C_2(n-m)\epsilon^2}{b}\right\rceil^{\frac{1}{\kappa+1}}$ with $C_2=(16C^2\zeta)^{-1}$, and $\zeta=\frac{a_{n-m}}{a_{n-m}-m}$, we have
\begin{align}
\delta'\leq (4e^2+(n-m)\overline \beta)\exp\left(-\min\left\{\left(\frac{b}{(n-m)\epsilon^2C_2}\right),1\right\}^{\frac{1}{k+1}}\frac{1}{2}(n-m)C_2\epsilon^2\right).\label{eq:parameter_delta'}
\end{align}
Define
\[\Lambda(n,\delta)=\log\left(\frac{2}{\delta}\right)+\log(\max\{4e^2,n\overline\beta\}),\] and 
\[\epsilon(\delta)=\sqrt{2\frac{\Lambda(n-m,\delta)}{C_2(n-m)}\max\left\{\frac{\Lambda(n-m,\delta)}{b},1\right\}^{\frac{1}{\kappa}}}.\] 
It can be shown that
\begin{equation}
\label{lemma:fonction_Lambda}
\exp\left(-\min\left\{\left(\frac{b}{(n-m)(\epsilon(\delta))^2C_2}\right),1\right\}^{\frac{1}{k+1}}\frac{1}{2}(n-m)C_2(\epsilon(\delta))^2\right)\leq\exp\left(-\Lambda(n-m,\delta) \right).
\end{equation}
Indeed\footnote{This inequality exists in \cite{lazaric:2012}, and is developped here for completeness.}, there are two cases:
\begin{enumerate}
\item Suppose that  $\min\left\{\left(\frac{b}{(n-m)(\epsilon(\delta))^2C_2}\right),1\right\}=1$. Then
\begin{align*}
& \exp\left(-\min\left\{\left(\frac{b}{(n-m)(\epsilon(\delta))^2C_2}\right),1\right\}^{\frac{1}{k+1}}\frac{1}{2}(n-m)C_2(\epsilon(\delta))^2\right) \\
= & \exp\left(-\Lambda(n-m,\delta)\max\left\{\frac{\Lambda(n-m,\delta)}{b},1\right\}^{\frac{1}{k}}\right)\\
\leq & \exp\left(-\Lambda(n-m,\delta)\right).
\end{align*}
\item Suppose now that $\min\left\{\left(\frac{b}{(n-m)(\epsilon(\delta))^2C_2}\right),1\right\}=\left(\frac{b}{(n-m)(\epsilon(\delta))^2C_2}\right)$. Then
\begin{align*}
\exp\left(-\frac{1}{2}b^{\frac{1}{k+1}}((n-m)C_2(\epsilon(\delta))^2)^\frac{k}{k+1}\right)&=\exp\left(-\frac{1}{2}b^{\frac{1}{k+1}}(\Lambda(n-m,\delta)^\frac{k}{k+1}\max\left\{\frac{\Lambda(n-m,\delta)}{b},1\right\}^{\frac{1}{k+1}}\right)\\
&=\exp\left(-\frac{1}{2}\Lambda(n-m,\delta)^\frac{k}{k+1}\max\left\{\Lambda(n-m,\delta),b\right\}^{\frac{1}{k+1}}\right)\\
&\leq \exp\left( -\Lambda(n-m,\delta)\right). 
\end{align*}
\end{enumerate}
By combining Equations~\eqref{eq:parameter_delta'} and~\eqref{lemma:fonction_Lambda}, we get
\[\delta'\leq (4e^2+(n-m)\overline \beta)\exp\left(-\Lambda(n-m,\delta) \right).\] 
If we replace $\Lambda(n-m,\delta)$ with its expression, we obtain
\[\exp\left(-\Lambda(n-m,\delta) \right)=\frac{\delta}{2} \max\{4e^2,(n-m)\overline \beta\}^{-1}.\]
Since $4e^2\max\{4e^2,(n-m)\overline \beta\}^{-1}\leq 1$ and  $(n-m)\overline{\beta}\max\{4e^2,(n-m)\overline \beta\}^{-1}\leq 1$, we consequently have
\[\delta'\leq 2\frac{\delta}{2}\leq \delta.\]
Now, note that since $a_{n-m}-m\geq 1$, we have
\begin{align*}
\zeta&= \frac{a_{n-m}}{a_{n-m}-m}=\frac{a_{n-m}-m+m}{a_{n-m}-m}\leq 1+m.
\end{align*}
Let $J(n,\delta)=32\Lambda(n,\delta)\max\left\{\frac{\Lambda(n,\delta)}{b},1\right\}^{\frac{1}{\kappa}}$. Then Equation~\eqref{eq:mixingvector} is reduced to 
\begin{align}
\P\left(\left\|\frac{1}{n-m}\sum^{n-1}_{i=m}{(U^m_i-\E[U^m_i])}\right\|_2\ge\frac{C}{\sqrt{n-m}}\left(\zeta J(n-m,\delta)\right)^{\frac{1}{2}}\right)\leq \delta.\label{eq:ineg_U}
\end{align}
Since $J(n,\delta)$ is an increasing function on $n$, and $\frac{n-1}{\sqrt{n-1}(n-m)}=\frac{1}{\sqrt{n-m}}\sqrt{\frac{n-1}{n-m}} \ge \frac{1}{\sqrt{n-m}}$, we have
\begin{align*}
&\P\left(\left\|\frac{1}{n-1}\sum^{n-1}_{i=m}(G^m_i-\E[G^m_i])\right\|_2\geq \frac{C}{\sqrt{n-1}}\left(\zeta J(n-1,\delta)\right)^{\frac{1}{2}}\right)\\
\leq~ & \P\left(\left\|\frac{1}{n-m}\sum^{n-1}_{i=m}(G^m_i-\E[G^m_i])\right\|_2\geq \frac{C}{\sqrt{n-1}}\frac{n-1}{n-m}\left((m+1) J(n-1,\delta)\right)^{\frac{1}{2}}\right)\\ 
\leq~ &\P\left(\left\|\frac{1}{n-m}\sum^{n-1}_{i=m}(G^m_i-\E[G^m_i])\right\|_2\geq \frac{C}{\sqrt{n-m}}\left((m+1) J(n-m,\delta)\right)^{\frac{1}{2}}\right).
\end{align*}
By using Equations~\eqref{eq:Frob_norm} and~\eqref{eq:ineg_U}, we deduce that
\begin{align}
\P\left(\left\|\frac{1}{n-1}\sum^{n-1}_{i=m}(G^m_i-\E[G^m_i])\right\|_2\geq \frac{C}{\sqrt{n-1}}\left((m+1)J(n-1,\delta)\right)^{\frac{1}{2}}\right)\leq\delta.\label{eq:bt3}
\end{align}
By combining Equations~\eqref{eq:bt1}, \eqref{eq:bt2},\eqref{eq:bt3}, plugging the value of $C=\frac{2\sqrt{dk}LL'}{1-\lambda\gamma}$, and  taking  $m=\left\lceil\frac{\log{(n-1)}}{\log \frac{1}{\lambda\gamma}}\right\rceil$, we get the announced result.

\bibliographystyle{natbib2}
\bibliography{biblio.bib}

\end{document}